\documentclass{article}

 \usepackage[preprint]{neurips_2026}

\usepackage[utf8]{inputenc} 
\usepackage[T1]{fontenc}    
\usepackage{hyperref}       
\usepackage{url}            
\usepackage{booktabs}       
\usepackage{amsfonts}       
\usepackage{nicefrac}       
\usepackage{microtype}      
\usepackage{xcolor}         

\usepackage{amsmath}
\usepackage{amsthm}
\usepackage{amsfonts}
\usepackage{amssymb}
\usepackage{enumitem}
\usepackage{tcolorbox}
\usepackage{algorithm}
\usepackage{algorithmic}
\usepackage{colortbl}
\usepackage{multirow}
\usepackage{wrapfig}
\usepackage{caption}
\usepackage{todonotes}

\newcommand{\E}{\mathbb{E}}
\newcommand{\KL}{D_{\mathrm{KL}}}

\newtheorem{proposition}{Proposition}

\title{Learning from Language Feedback via\\Variational Policy Distillation}

\author{%
  Yang Li \quad Erik Nijkamp \quad Semih Yavuz \quad Shafiq Joty \\
  Salesforce AI Research \\
  \texttt{\{yli2, erik.nijkamp, syavuz, sjoty\}@salesforce.com}
}

\begin{document}

\maketitle

\begin{abstract}
Reinforcement learning from verifiable rewards (RLVR) suffers from sparse outcome signals, creating severe exploration bottlenecks on complex reasoning tasks. Recent on-policy self-distillation methods attempt to address this by utilizing language feedback to generate dense, token-level supervision. However, these approaches rely on a fixed, passive teacher to interpret the feedback. As the student policy improves, the teacher's zero-shot assessment capabilities plateau, ultimately halting further learning. To overcome this, we propose Variational Policy Distillation (VPD), a framework that formalizes learning from language feedback as a Variational Expectation-Maximization (EM) problem. VPD co-evolves both policies: in the E-step, the teacher is actively refined on trajectory outcomes via an adaptive trust-region update, translating textual feedback into a dynamically improved target token distribution. In the M-step, the student internalizes this dense distributional guidance on its own on-policy rollouts. By continuously improving the teacher's ability to extract actionable signals from textual critique, VPD overcomes the limitations of passive distillation. Evaluated across diverse sources of diagnostic feedback on scientific reasoning and code generation tasks, VPD consistently outperforms both standard RLVR and existing self-distillation baselines. Finally, by stress-testing our framework on rigid mathematical reasoning and cold-start regimes, we illuminate the fundamental bounds of feedback-driven self-distillation compared to pure environment-driven RL.
\end{abstract}

\section{Introduction}\label{sec:intro}

Recent leaps in the reasoning capabilities of large language models (LLMs) have been largely driven by reinforcement learning from verifiable rewards (RLVR) \citep{guo2025deepseek,shao2024deepseekmath,yang2025qwen3}. By optimizing models against objective, outcome-based correctness, RLVR avoids the high cost of human preference data in standard RLHF \citep{ouyang2022training}. However, standard policy gradient methods like GRPO \citep{shao2024deepseekmath} and their variants \citep{zheng2025group,yu2025dapo} rely almost entirely on sparse, binary outcome signals. This creates a severe credit assignment bottleneck: a minor arithmetic mistake in a complex derivation receives the same zero-reward as a completely nonsensical hallucination. Consequently, standard outcome-based RL is notoriously \emph{sample inefficient} \citep{zhang2025improving,zheng2025act}. On hard problems where the model's initial success rate is near zero, on-policy algorithms face an extreme exploration bottleneck: they receive zero positive learning signal regardless of how many rollouts are sampled, entirely wasting the valuable latent information embedded in near-miss trajectories \citep{qu2026pope,setlur2026reuse}.

To overcome this sparsity, a promising paradigm is to learn directly from \emph{language feedback}. In many real-world and agentic settings, failure is accompanied by rich textual diagnostics, such as automated critiques from a stronger LLM, compiler error traces, or user corrections. This textual feedback can potentially provide exactly the dense, localized supervision that scalar rewards lack, pointing out not just \emph{that} an attempt failed, but \emph{why} and \emph{how} it should be fixed. 

Leveraging this rich feedback effectively, however, remains an open challenge. \emph{Off-policy} methods, such as supervised fine-tuning on expert traces or feedback-revised trajectories, suffer from distribution mismatch: the student model often lacks the internal capacity to faithfully reproduce the external teacher's reasoning, leading to copycat behavior without genuine comprehension \citep{liu2023chain,scheurer2023training}. Recently, \emph{on-policy self-distillation} methods like SDPO \citep{hubotter2026reinforcement} and OPSD \citep{zhao2026self} condition the model itself on language feedback to act as a ``self-teacher,'' distilling feedback-informed next-token predictions back into the unconditioned policy. Crucially, by sampling directly from the student's own distribution, these methods avoid the severe train-inference distribution mismatch of off-policy approaches \citep{shenfeld2026self,agarwal2024policy}, allowing the teacher to act as an internal critic providing dense, token-level learning signals \citep{hubotter2026reinforcement}.

Yet, existing self-distillation approaches suffer from a critical flaw: they treat the feedback-conditioned self-teacher as a \emph{fixed, passive function}. The quality of the distillation signal depends entirely on the model's zero-shot ability to parse and exploit the language feedback. If the critique is noisy, or if the model cannot yet map natural language hints to structural token adjustments, the self-teacher's guidance can become counterproductive. Furthermore, as the student internalizes basic corrections, the zero-shot advantage of appending feedback diminishes. Since a passive teacher is never explicitly trained to be a sharper critic, its ability to distinguish between increasingly subtle reasoning errors plateaus, ultimately starving the student of further meaningful gradients.

To address this, we propose \textbf{Variational Policy Distillation (VPD)}, a principled framework that frames learning from language feedback as a variational inference problem. Instead of taking the self-teacher's feedback interpretation for granted, VPD treats the feedback-conditioned model as an \emph{approximate posterior} over correct solutions that must be actively optimized alongside the student policy. This variational perspective naturally yields an Expectation-Maximization (EM) algorithm that enables the teacher and student to co-evolve:

\begin{itemize}[leftmargin=*,topsep=0pt,itemsep=0pt]
    \item \textbf{E-step (Teacher Refinement):} We actively train the teacher's ability to interpret language feedback. By optimizing the teacher to distinguish between successful and failed trajectories \emph{given} the rich textual critique, we effectively teach the teacher how to read and leverage the feedback.
    \item \textbf{M-step (Student Optimization):} We distill this refined knowledge back into the student. By minimizing the token-level KL divergence against the improved teacher on its own on-policy rollouts, the student internalizes this dense learning signal to succeed zero-shot at deployment.
\end{itemize}

By ensuring the teacher's assessment capabilities scale alongside the student's reasoning, VPD extracts significantly more value from language feedback than passive distillation methods. We summarize our contributions as follows:
\begin{enumerate}[leftmargin=*,topsep=0pt,itemsep=0pt]
    \item We formalize on-policy learning from language feedback as a Variational EM procedure. This introduces an explicit, feedback-aware teacher update (absent from prior self-distillation methods) implemented via unpaired preference optimization. By dynamically anchoring the teacher to the current student, we enforce an adaptive trust region that ensures highly stable on-policy KL distillation over a shared-weight network.
    \item We present a comprehensive empirical study instantiating VPD across three sources of diagnostic feedback: deterministic environment verifiers, contrastive sibling rollouts, and autonomous self-critique. Evaluated on benchmarks spanning competitive programming and scientific reasoning, VPD consistently outperforms standard RLVR and self-distillation baselines.
    \item We characterize the fundamental regimes in which language-feedback distillation outperforms sparse RL, and where it does not. By stress-testing our framework on base-model cold-start scenarios and challenging mathematical reasoning, we empirically establish the bounds of language-driven self-distillation. While VPD significantly mitigates and delays the training collapse typically observed in these settings, our results demonstrate that pure sparse RL ultimately remains the most effective paradigm.
\end{enumerate}

\section{Preliminaries}\label{sec:prelim}
\vspace{-5pt}

\textbf{Reinforcement Learning from Verifiable Rewards.}
We model language generation as a contextual bandit problem where the context is the user prompt $x \in \mathcal{X}$. A language model, parameterized by $\theta$, represents a policy $\pi_\theta$ that generates a response $y = (y_1, \ldots, y_T)$ autoregressively: $y \sim \pi_\theta(\cdot \mid x)$. In the RLVR framework, an environment or rule-based verifier assesses the final response and assigns a scalar outcome reward $r(x, y)$. For complex reasoning tasks, such as mathematical theorem proving or competitive programming, this reward is typically sparse and binary, $r(x, y) \in \{0, 1\}$.

The standard RLVR objective seeks to maximize the expected reward while penalizing deviations from an initial reference policy $\pi_{\text{ref}}$ (typically the supervised fine-tuned model) to prevent catastrophic forgetting or "over-optimization" \citep{gao2023scaling}:
\begin{equation}\label{eq:rlvr}
    \mathcal{J}_{\text{RLVR}}(\theta) = \E_{x \sim \mathcal{D}, y \sim \pi_\theta(\cdot|x)} \left[ r(x, y) \right] - \beta \KL \left( \pi_\theta(\cdot \mid x) \parallel \pi_{\text{ref}}(\cdot \mid x) \right),
\end{equation}
where $\beta$ controls the strength of the KL penalty. Modern post-training pipelines often optimize this objective using algorithms like Group Relative Policy Optimization (GRPO) \citep{shao2024deepseekmath} and its variants \citep{zheng2025group,yu2025dapo}, which estimate gradients using advantage scores normalized across a group of responses. However, because $r(x,y)$ is a sparse outcome signal, if the model fails to sample any correct answers for a given prompt (i.e., $r(x, y_i) = 0$ for all $i$), the advantage scores collapse, halting the learning process and establishing a severe exploration bottleneck \citep{qu2026pope}.

\textbf{On-Policy Self-Distillation.}
To circumvent the sparsity of outcome-based rewards, recent approaches leverage rich textual feedback $\mathcal{C}$ (e.g., compiler error messages or LLM-generated critique) to construct a dense learning signal. Methods like Self-Distillation Policy Optimization (SDPO) \citep{hubotter2026reinforcement} condition the model itself on $\mathcal{C}$ to act as an on-policy ``self-teacher''.

Given a student rollout $y \sim \pi_\theta(\cdot \mid x)$ and its corresponding feedback $\mathcal{C}$, the self-teacher is defined as the identical model conditioned on the augmented prompt: $\pi_\theta(\cdot \mid x, \mathcal{C})$. The goal is to align the unconditioned student policy with the feedback-informed teacher's next-token distribution. The SDPO objective minimizes the token-level KL divergence on the student's own rollouts:
\begin{equation}\label{eq:sdpo}\textstyle
    \mathcal{L}_{\text{SDPO}}(\theta) = \E_{x \sim \mathcal{D}, y \sim \pi_\theta(\cdot|x)} \left[ \sum_{t=1}^T \KL \left( \pi_\theta(\cdot \mid x, y_{<t}) \parallel \text{sg}\left[ \pi_\theta(\cdot \mid x, \mathcal{C}, y_{<t}) \right] \right) \right],
\end{equation}
where $\text{sg}[\cdot]$ denotes the stop-gradient operator. Note that Forward KL or JS divergence can interchangeably be used here depending on desired dynamics. While Eq.~\ref{eq:sdpo} provides dense gradients, the stop-gradient highlights a fundamental limitation: the teacher is never explicitly optimized. Instead, the teacher $\pi_\theta(\cdot \mid x, \mathcal{C})$ operates purely zero-shot, relying on its pre-existing capacity to interpret the textual feedback $\mathcal{C}$. Since the teacher is not trained to refine its diagnostic interpretation, this creates a ceiling effect that restricts the gradients the teacher can ultimately provide to an improving student.

\section{Variational Policy Distillation}\label{sec:method}

In contrast to passive self-distillation methods that treat the feedback-conditioned model as a fixed heuristic, we frame learning from language feedback as a variational inference problem. This perspective allows the teacher to co-evolve alongside the student, actively learning to extract deeper insights from the textual feedback. 

\subsection{Variational Formulation}

As established in the literature \citep{peng2019advantage,rafailov2023direct,go2023aligning}, the optimal policy $\pi^*$ under the KL-regularized RLVR objective (Eq.~ \ref{eq:rlvr}) takes the form of a reward-tilted distribution:
\begin{equation}\textstyle
    \pi^*(y \mid x) = \frac{1}{Z(x)} \pi_{\text{ref}}(y \mid x) \exp\left( \frac{1}{\beta} r(x, y) \right).
    \label{eq:optimal_policy}
\end{equation}
Theoretically, optimizing the original RLVR objective is mathematically equivalent to minimizing the reverse KL divergence $\KL(\pi_\theta \parallel \pi^*)$ (see Appendix~\ref{sec:theory_app} for full derivations). In practice, however, directly minimizing this divergence is computationally infeasible because the partition function $Z(x)$ is analytically intractable, preventing us from explicitly evaluating the target distribution $\pi^*$. Standard reinforcement learning methods bypass this intractability by taking the gradient of the objective, which elegantly cancels out $Z(x)$ and yields standard policy gradient estimators. Yet, relying on these gradients inherently reduces the optimization back to sampling-based reward estimation, thrusting us right back into the sparse reward bottleneck discussed in Sec.~\ref{sec:prelim}.

To bypass this intractability, we cast the alignment process as a variational inference problem. We introduce a parameterized teacher network, $q_\phi(y \mid x, \mathcal{C})$—conditioned on the dense diagnostic feedback $\mathcal{C}$—to serve as a tractable approximate posterior for the optimal distribution $\pi^*$. While the unconditioned student $\pi_\theta$ must blindly search the vast trajectory space for sparse rewards, the inclusion of $\mathcal{C}$ allows the teacher $q_\phi$ to more effectively approximate the high-reward modes of $\pi^*$.

Mathematically, introducing this surrogate allows us to lower-bound the intractable RLVR objective using an Evidence Lower Bound (ELBO) (see Appendix \ref{app:em_bound} for details). This variational formulation naturally decomposes the training process into an Expectation-Maximization (EM) algorithm:
\begin{enumerate}[leftmargin=*,topsep=0pt,itemsep=0pt]
    \item \textbf{E-Step (Teacher Refinement):} We optimize the teacher parameters $\phi$ to minimize its divergence from the reward-tilted optimal target: $\min_\phi D_{\text{KL}}(q_\phi(y \mid x, \mathcal{C}) \parallel \pi^*(y \mid x))$. This forces the teacher to actively learn how to translate textual diagnostics into high-reward token distributions.
    \item \textbf{M-Step (Student Distillation):} We update the student parameters $\theta$ to minimize its divergence from the refined teacher: $\min_\theta D_{\text{KL}}(\pi_\theta(y \mid x) \parallel q_\phi(y \mid x, \mathcal{C}))$. This allows the student to internalize the teacher's dense diagnostic guidance.
\end{enumerate}
Crucially, while we maintain distinct notation for the teacher ($\phi$) and student ($\theta$) to mathematically isolate their alternating optimization phases, both policies are instantiated within a single, shared-weight neural network ($\phi = \theta$) in practice. The two distributions remain behaviorally distinct simply because the teacher is conditionally prompted with the diagnostic feedback $\mathcal{C}$. This unified architecture allows us to execute complex co-evolutionary distillation while entirely eliminating the memory overhead typically associated with dual-model paradigms. 

\vspace{-2pt}
\subsection{E-Step: Teacher Refinement via Off-policy Preference Optimization}
\label{subsec:e_step}
\vspace{-2pt}

For the student $\pi_\theta$ to learn effectively during the subsequent M-step, the teacher $q_\phi$ must first become a highly accurate surrogate for the intractable optimal policy $\pi^*$. Therefore, the primary objective of the E-step is to minimize the divergence $\KL(q_\phi \parallel \pi^*)$. As formally derived in Appendix \ref{app:em_bound}, we can decompose this divergence as follows:
\begin{equation}\textstyle
    \KL(q_\phi \parallel \pi^*) = \log Z(x) - \left( \frac{1}{\beta} \mathbb{E}_{y \sim q_\phi} \left[ r(x, y) \right] - \KL(q_\phi \parallel \pi_{\text{ref}}) \right).
\end{equation}
Since the log-partition function $\log Z(x)$ is a constant with respect to the teacher parameters $\phi$, minimizing this divergence is mathematically equivalent to maximizing the term inside the parentheses. Multiplying by $\beta$, this yields our E-step objective:
\begin{equation}
    \mathcal{J}_{\text{E-Step}}(\phi) = \mathbb{E}_{y \sim q_\phi} \left[ r(x, y) \right] - \beta \KL(q_\phi \parallel \pi_{\text{ref}}).
    \label{eq:e_step_rl}
\end{equation}
Notice that $\mathcal{J}_{\text{E-Step}}(\phi)$ takes the exact mathematical form of a standard KL-regularized RL objective (similar to Eq.~\ref{eq:rlvr}). While it is theoretically possible to optimize this via standard on-policy RL, doing so would force the teacher to independently search for successful outcomes, immediately re-introducing the severe sparse reward bottleneck we aim to bypass. Instead, we efficiently train the teacher off-policy by leveraging the diverse exploration trajectories already generated by the student.

\textbf{Off-policy Preference Optimization.}
We frame this off-policy learning as preference optimization. Since Eq.~\ref{eq:e_step_rl} mirrors the original RL objective, the closed-form optimal distribution for the teacher takes the same reward-tilted form as Eq.~\ref{eq:optimal_policy}:
$
    q^*(y \mid x, \mathcal{C}) = \frac{1}{Z(x)} \pi_{\text{ref}}(y \mid x) \exp\left( \frac{1}{\beta} r(x, y) \right).
$
By algebraically rearranging this expression and substituting our parameterized network $q_\phi$, we obtain the \emph{implicit reward} defined by the teacher's current parameters:
\begin{equation}\textstyle
    r_\phi(x, y) = \beta \log \frac{q_\phi(y \mid x, \mathcal{C})}{\pi_{\text{ref}}(y \mid x)} + \beta \log Z(x).
    \label{eq:implicit_reward}
\end{equation}
\textbf{Dynamic Reference Prior.}
In standard preference optimization, this implicit reward is anchored to a static base model. However, in a co-evolutionary framework, optimizing against a stale prior can lead to severe distribution shift between teacher and student \citep{wu2024self,rosset2024direct,pang2024iterative}. To ensure the teacher remains a useful critic for the student's current capabilities, we frame the E-step as an iterative trust-region update by dynamically anchoring the reference prior to the current student policy ($\pi_{\text{ref}} \leftarrow \pi_\theta$) \citep{schulman2015trust,schulman2017proximal}. This yields our \emph{effective implicit reward}:
\begin{equation}\textstyle
    r_\phi(x, y, \mathcal{C}) = \beta \log \frac{q_\phi(y \mid x, \mathcal{C})}{\pi_\theta(y \mid x)} + \beta \log Z(x) \triangleq \tilde{r}_\phi(x,y,\mathcal{C}) + \beta \log Z(x).
    \label{eq:effective_implicit_reward}
\end{equation}
By setting the prior to $\pi_\theta$, we redefine the optimal target $\pi^*$ as a student-relative posterior. To ensure optimization stability, we freeze the student likelihoods $\pi_\theta(y|x)$ during each E-step; we provide a rigorous analysis of this implicit reward and the resulting trust-region dynamics in Appendix~\ref{app:dynamic_prior}.

\textbf{Unpaired Preference Optimization.} Given our dynamically anchored implicit reward, if we were to optimize the teacher using the standard Bradley-Terry preference model (as in Direct Preference Optimization \citep{rafailov2023direct}), we would require paired responses $(y^+, y^-)$ evaluated under the exact same input context. However, in our framework, the diagnostic feedback $\mathcal{C}$ acts as the input context for the teacher, and this feedback is uniquely generated for each individual student trajectory $y$. Consequently, we cannot construct valid preference pairs, since there is no shared feedback context between any two distinct trajectories.

To overcome this structural bottleneck, we adopt Binary Classifier Optimization (BCO) \citep{jung2025binary}, an unpaired preference optimization framework. In a standard paired setting, the objective relies on the difference between implicit rewards, allowing the prompt-specific partition function $\beta \log Z(x)$ to perfectly cancel out. By leveraging the fundamental property of the sigmoid function, $\log \sigma(a - b) \ge \log \sigma(a) + \log \sigma(-b)$, BCO decouples the paired DPO objective into two independent parts for positive and negative samples. Substituting our computable $\tilde{r}_\phi$ terms into this inequality establishes a Binary Cross-Entropy (BCE) loss that acts as an upper bound to the standard paired DPO loss:
\begin{equation}
\begin{aligned}
    \mathcal{L}_{\text{DPO}}(\phi) &= -\mathbb{E}_{(x, y^+, y^-)} \left[ \log \sigma \left( \tilde{r}_\phi(x, y^+, \mathcal{C}^+) - \tilde{r}_\phi(x, y^-, \mathcal{C}^-) \right) \right]\\
    &\le -\mathbb{E}_{y^+} \left[ \log \sigma(\tilde{r}_\phi(x, y^+, \mathcal{C}^+)) \right] - \mathbb{E}_{y^-} \left[ \log \sigma(-\tilde{r}_\phi(x, y^-, \mathcal{C}^-)) \right].
    \label{eq:bco_bound}
\end{aligned}
\end{equation}
To minimize the approximation gap of this upper bound, we introduce a \emph{reward shift} parameter $\delta$ as prescribed by the BCO method \citep{jung2025binary}. This yields our final E-step objective:
\begin{equation}
    \mathcal{L}_{\text{E-step}}(\phi) = -\mathbb{E}_{y^+} \left[ \log \sigma(\tilde{r}_\phi(x, y^+, \mathcal{C}^+) - \delta) \right] - \mathbb{E}_{y^-} \left[ \log \sigma(-(\tilde{r}_\phi(x, y^-, \mathcal{C}^-) - \delta)) \right],
    \label{eq:e_step_final}
\end{equation}
where $\delta$ is dynamically estimated as the moving average of the batch implicit rewards: $\delta = \frac{1}{2}(\mathbb{E}[\tilde{r}_\phi(x, y^+, \mathcal{C}^+)] + \mathbb{E}[\tilde{r}_\phi(x, y^-, \mathcal{C}^-)])$.



\subsection{M-Step: Student Optimization}
\label{subsec:m_step}

With the teacher $q_\phi$ successfully refined in the E-step to approximate the local optimal policy, the M-step focuses on transferring this knowledge to the student $\pi_\theta$. Since the student operates without the privileged diagnostic feedback $\mathcal{C}$ at inference time, it must implicitly internalize the reasoning corrections discovered by the teacher. 

Mathematically, this corresponds to the maximization phase of the EM framework. Holding the teacher's parameters fixed, we project its feedback-conditioned distribution back into the student's unconditioned hypothesis space. We achieve this by minimizing the token-level KL divergence between the student and the updated teacher, sampled over the student's own on-policy rollouts:
\begin{equation}\textstyle
    \mathcal{L}_{\text{M-step}}(\theta) = \mathbb{E}_{x \sim \mathcal{D}, y \sim \pi_\theta(\cdot \mid x)} \left[ \sum_{t=1}^T \KL \left( \pi_\theta(\cdot \mid x, y_{<t}) \parallel \text{sg}\left[ q_\phi(\cdot \mid x, \mathcal{C}, y_{<t}) \right] \right) \right],
    \label{eq:m_step_final}
\end{equation}
where $\text{sg}[\cdot]$ denotes the stop-gradient operation, ensuring that optimization is strictly isolated to the student parameters $\theta$.

This projection reveals the theoretical necessity of the dynamic reference prior introduced in the E-step. Because the teacher $q_\phi$ was explicitly constrained to stay within the local trust-region of the student ($\KL(q_\phi \parallel \pi_\theta)$), we guaranteed that the teacher's target distribution remains fundamentally reachable. Consequently, this M-step distillation is highly stable, sidestepping the extreme gradient variance and mode-collapse issues that typically plague models forced to distill from a disconnected or overly dominant oracle.

As the student masters basic syntax and logic, its improved rollouts raise the baseline for the next EM cycle. The E-step then pushes the teacher to focus on increasingly complex, multi-step logical flaws, ensuring the student is continuously challenged and never starved of meaningful gradients.


\vspace{-2pt}
\subsection{Algorithm Summary}
\label{subsec:algorithm}
\vspace{-2pt}

The co-evolutionary procedure alternates between four phases: (1) gathering on-policy student rollouts, (2) generating textual critique via the environment, (3) updating the teacher via unpaired preference optimization (E-step), and (4) distilling the updated teacher into the student (M-step).

To eliminate the severe memory overhead of multi-model co-evolution, we instantiate both the student $\pi_\theta$ and teacher $q_\phi$ within a single shared-weight network ($\theta = \phi$). The distinction is purely contextual: the teacher is invoked by appending diagnostic feedback $\mathcal{C}$ to the prompt, whereas the student relies solely on the unconditioned input $x$.

This unified architecture and dynamic reference policy allow highly efficient execution on standard hardware, bypassing the need for separate frozen reference models. Furthermore, while Algorithm \ref{alg:em_vpd} depicts synchronous updates, VPD natively supports asymmetric frequencies (e.g., multiple M-steps per E-step). Updating the student more frequently acts like a target network in RL; it stabilizes the target distribution, ensuring the student internalizes guidance before the teacher advances.

Since shared-weight sequential updating shifts parameters during the E-step, initial rollouts $y \sim \pi_{\theta_{k-1}}$ become nominally off-policy for the M-step. While this could be rigorously corrected via importance sampling ($\rho = \pi_{\theta_{\text{current}}} / \pi_{\theta_{k-1}}$) as in PPO \citep{schulman2017proximal}, our E-step's constrained trust-region empirically renders this unnecessary. Omitting importance sampling yields no measurable degradation while streamlining implementation. The complete training procedure is summarized in Algorithm \ref{alg:em_vpd}.

\vspace{-3pt}
\section{Experiments}\label{sec:experiments}
\vspace{-2pt}

We design our experiments to answer three primary questions: (1) How does our EM decoupling compare to standard RL and self-distillation baselines? (2) How effectively can VPD leverage different sources of diagnostic feedback? and (3) When stress-testing on-policy self-distillation in extremely challenging scenarios, to what extent can VPD overcome these inherent limitations?

\textbf{Models and Benchmarks.} To ensure our findings generalize across different architectures and scales, our evaluation suite comprises Qwen3-1.7B, Qwen3-8B \citep{yang2025qwen3} and OLMo3-7B-Instruct \citep{olmo2025olmo}. Across our various experiments, we test reasoning capabilities spanning diverse domains: scientific reasoning via SciKnowEval (Biology, Chemistry, Material, Physics) \citep{feng2024sciknoweval}, code generation via LiveCodeBench (LCB) \citep{jain2024livecodebench}, and mathematical reasoning via DAPO-Math \citep{yu2025dapo}, Math500 \citep{lightman2023lets}, AIME24/25 \citep{aime24,aime25}, and AMC23. The specific model-benchmark pairings for each evaluation are detailed in their respective subsections below.

\textbf{Baselines.} Since our VPD framework integrates signals from both a sequence-level verifier and token-level teacher guidance, we benchmark against three classes of optimization paradigms: \textbf{(1) Pure RL:} GRPO \citep{shao2024deepseekmath}, relying entirely on sparse sequence-level rewards; \textbf{(2) Pure Distillation:} SDPO \citep{hubotter2026reinforcement}, distilling reasoning via token-level KL penalties without environment rewards; and \textbf{(3) Single-Phase Hybrids:} Three coupled mechanisms defined in Section \ref{subsec:hybrid_baselines} (Joint Loss, Advantage Reshaping, and Advantage Reweighting), which attempt to fuse RL and distillation into a simultaneous update step.

\subsection{Efficacy Across Feedback Sources}

A core component of our on-policy distillation framework is the diagnostic feedback $\mathcal{C}$. To demonstrate the versatility of VPD, we evaluate its efficacy across three distinct sources of feedback: verifiable environment execution, successful sibling rollouts, and self-critique via an LLM judge.

\begin{wraptable}{r}{0.35\linewidth}
    \centering
    \vspace{-12pt}
    \caption{Performance on the LCBv6 subset using Qwen3-8B (thinking mode off).}
    \label{tab:lcb}
    \footnotesize
    \begin{tabular}{r|c}
    \toprule
        Qwen3-8B &  28.05\\
    \midrule
        GRPO & 45.61 \\
        SDPO & 47.33 \\
        SDPO+RL (Adv Reshape) & 46.95 \\
        SDPO+RL (Adv Reweight) & 44.85 \\
        SDPO+RL (Joint Loss) & 47.52 \\
        VPD (Ours) & \textbf{49.62} \\
    \bottomrule
    \end{tabular}

    \vspace{7pt}

    \includegraphics[width=\linewidth]{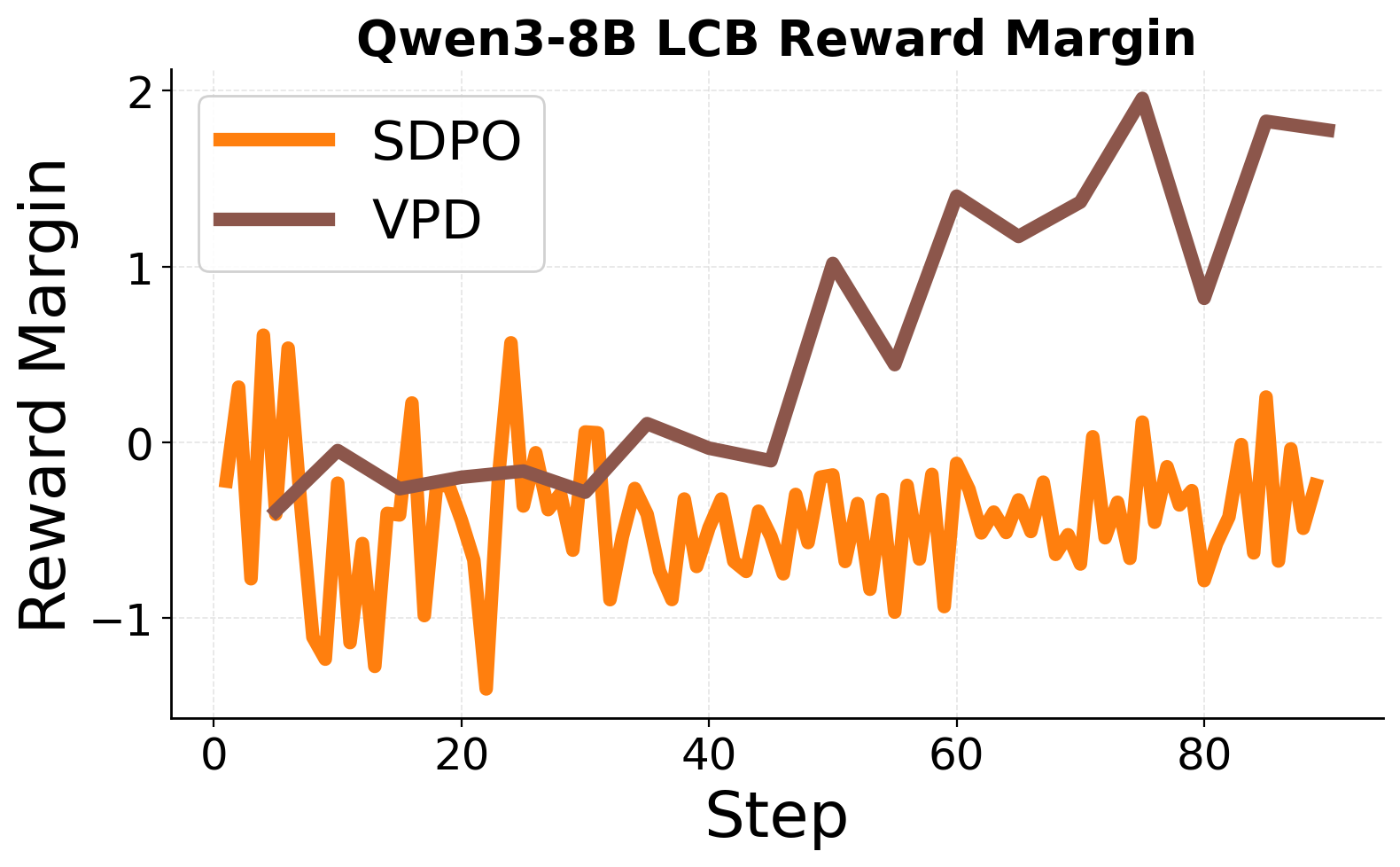}
    \vspace{-15pt}
    \captionof{figure}{Reward margin between correct and incorrect responses during LCB training. }
    \label{fig:lcb_margin}
    \vspace{-10pt}
\end{wraptable}

\textbf{1. Environment Feedback (LiveCodeBench).}
For code generation tasks, the environment acts as a natural, deterministic verifier, providing rich feedback such as runtime errors and failed unit test assertions. We evaluate Qwen3-8B (with reasoning/thinking mode disabled) on the LiveCodeBench (LCB) v6 subset, following the public and private unit test settings established by SDPO \citep{hubotter2026reinforcement}. Detailed hyperparameters and setup configurations are provided in Appendix \ref{subsec:lcb}. As shown in Table \ref{tab:lcb}, both pure RL (GRPO) and pure distillation (SDPO) significantly improve upon the base model's 28.05\% pass rate. However, attempting to simultaneously fuse these signals via single-phase hybrids yields highly mixed results: Advantage Reshaping and Advantage Reweighting degrade performance compared to pure SDPO (dropping to 46.95\% and 44.85\%, respectively), while the Joint Loss formulation provides only a marginal gain. In contrast, our decoupled VPD framework effectively internalizes the compiler's diagnostic feedback, achieving a state-of-the-art 49.62\% pass rate. To demonstrate the sustained discriminative power of our approach, Figure~\ref{fig:lcb_margin} plots the reward margin between correct and incorrect student responses during training, defined as $\tilde{r}(x,y^+,\mathcal{C}^+) - \tilde{r}(x,y^-,\mathcal{C}^-)$. As training progresses, we observe that the margin for standard SDPO rapidly diminishes, indicating a loss of useful guidance. Conversely, VPD consistently increases this margin, confirming that our E-step successfully continuously refines the teacher to distinguish between high- and low-quality trajectories.

\begin{wraptable}{r}{0.56\linewidth}
\centering
\vspace{-2pt}
\caption{Performance on the SciKnowEval benchmark using contrastive sibling rollouts as feedback.}
\label{tab:sciqa}
\footnotesize
\setlength{\tabcolsep}{3pt}
\begin{tabular}{rccccc}
\toprule
 & \textbf{Bio.} & \textbf{Chem.} & \textbf{Mat.} & \textbf{Phys.} & \textbf{AVG} \\
\midrule
\rowcolor{gray!20}
\textbf{Qwen3-1.7B} & 33.63 & 40.95 & 50.07 & 49.92 & 43.64 \\
GRPO & 61.25 & 75.65 & 74.93 & 67.42 & 69.81 \\
SDPO & 61.50 & 72.41 & 71.54 & 59.92 & 66.34 \\
SDPO+RL (Adv Reshape) & 54.62 & 54.94 & 66.62 & 58.83 & 58.75 \\
SDPO+RL (Adv Reweight) & 57.75 & 73.36 & 75.27 & 62.58 & 67.24 \\
SDPO+RL (Joint Loss) & 58.00 & 77.53 & 68.88 & 62.11 & 66.63 \\
VPD (Ours) & \textbf{64.75} & \textbf{81.88} & \textbf{77.06} & \textbf{73.67} & \textbf{74.34} \\
\midrule
\rowcolor{gray!20}
\textbf{QWEN3-8B} & 31.13 & 42.14 & 59.18 & 59.14 & 47.90 \\
GRPO & 62.50 & 76.58 & 77.26 & 76.09 & 73.11 \\
SDPO & 61.62 & 82.26 & \textbf{79.59} & 74.30 & 74.44 \\
SDPO+RL (Adv Reshape) & 52.00 & 75.18 & 75.07 & 74.14 & 69.10 \\
SDPO+RL (Adv Reweight) & 61.37 & 76.46 & 75.80 & 70.39 & 71.01 \\
SDPO+RL (Joint Loss) & 66.00 & 81.01 & 73.87 & 73.12 & 73.50 \\
VPD (Ours) & \textbf{68.00} & \textbf{82.38} & 77.66 & \textbf{80.55} & \textbf{77.15} \\
\midrule
\rowcolor{gray!20}
\textbf{Olmo3-7B-Instruct} & 16.12 & 22.77 & 34.71 & 37.34 & 27.74 \\
GRPO & 52.88 & 69.88 & 74.00 & 66.09 & 65.71 \\
SDPO & 50.00 & 80.15 & 70.21 & 63.91 & 66.07 \\
SDPO+RL (Adv Reshape) & 50.25 & 65.83 & 72.47 & 66.02 & 63.64 \\
SDPO+RL (Adv Reweight) & \textbf{58.25} & 76.67 & 74.87 & 66.56 & 69.09 \\
SDPO+RL (Joint Loss) & 55.37 & \textbf{80.51} & 71.21 & 69.45 & 69.14 \\
VPD (Ours) & 55.62 & 80.33 & \textbf{76.06} & \textbf{71.17} & \textbf{70.80} \\
\bottomrule
\end{tabular}

\vspace{10pt} 

\includegraphics[width=0.48\linewidth]{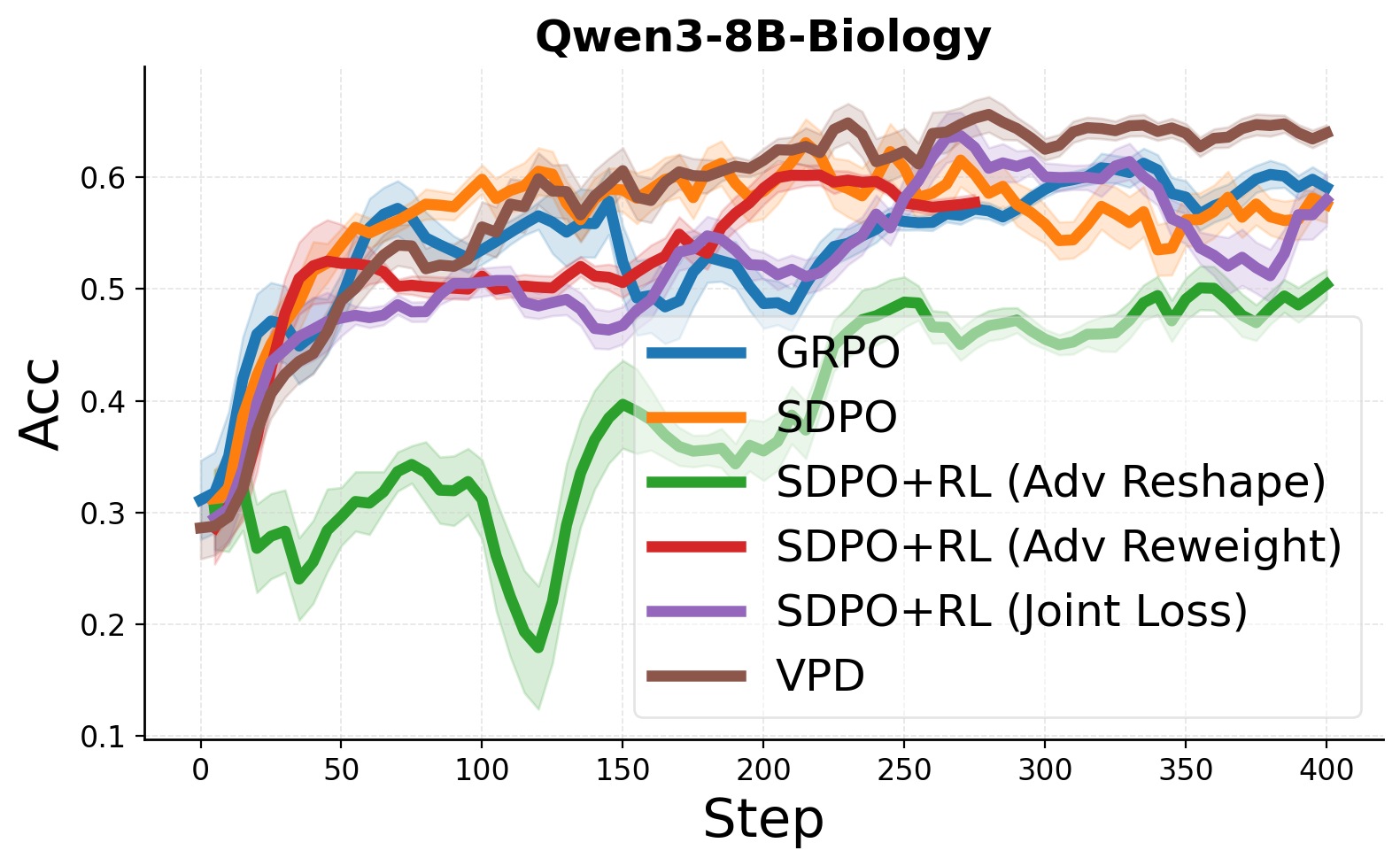}
\hfill
\includegraphics[width=0.48\linewidth]{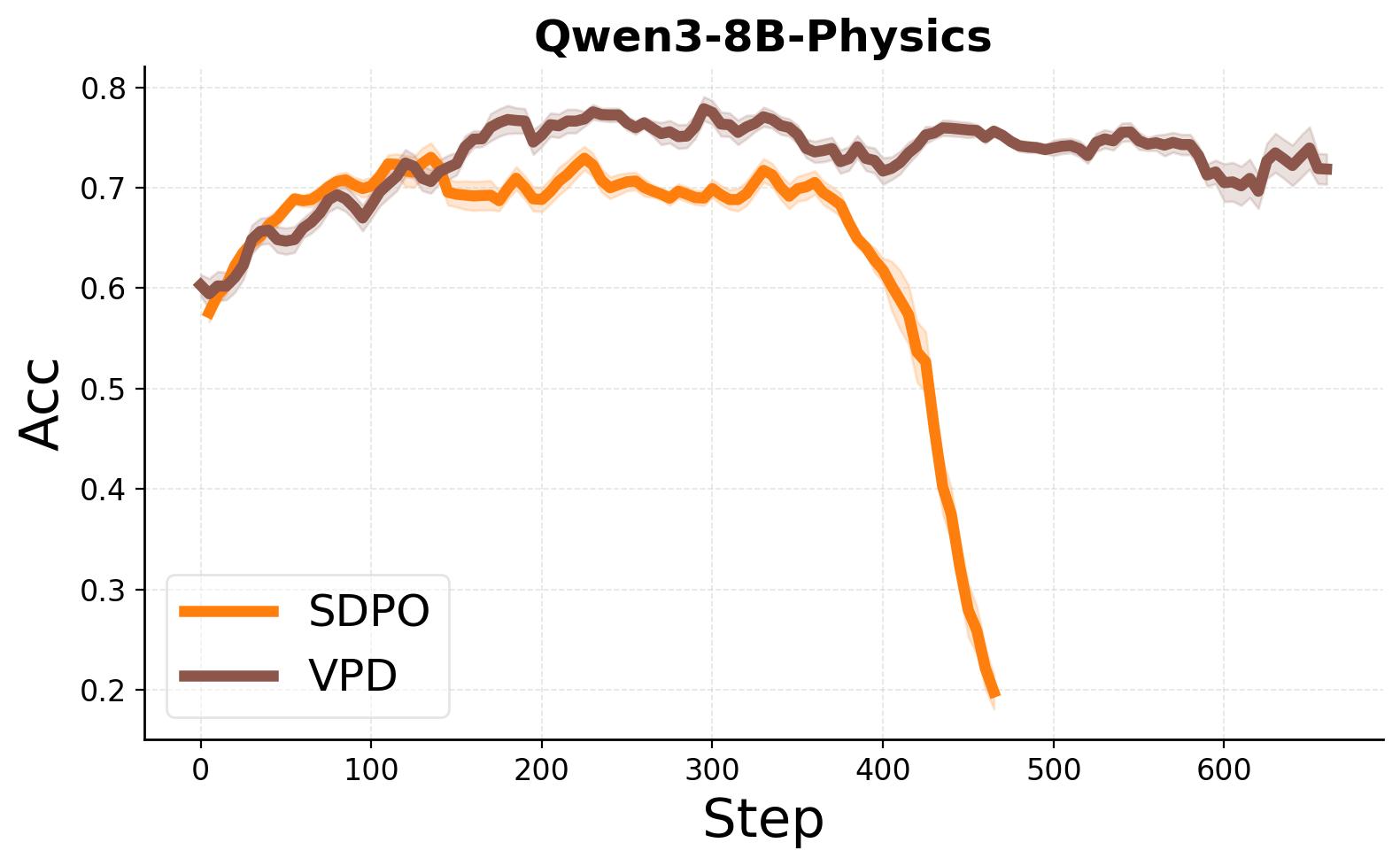}
\captionof{figure}{Training progression on SciKnowEval}
\label{fig:sciqa}

\end{wraptable}

\textbf{2. Contrastive Sibling Rollouts (SciKnowEval).}
For many scientific reasoning tasks, ground-truth textual feedback is unavailable; the environment only provides a sparse, binary correctness signal. In these scenarios, we can synthesize the diagnostic feedback $\mathcal{C}$ using the model's own generations. Following the methodology of SDPO \citep{hubotter2026reinforcement}, we provide the student with a successful trajectory sampled from the same prompt's rollout group, effectively using a contrastive sibling rollout as the textual guidance. We evaluate this setting on the SciKnowEval benchmark across Qwen3-1.7B, Qwen3-8B, and OLMo3-7B-Instruct (all with thinking mode disabled). As detailed in Table~\ref{tab:sciqa}, VPD consistently outperforms both pure baselines and single-phase hybrids across almost all domains. Looking at the aggregate averages, VPD establishes a clear state-of-the-art: achieving 74.34\% on Qwen3-1.7B (vs. GRPO's 69.81\%), 77.15\% on Qwen3-8B (vs. SDPO's 74.44\%), and 70.80\% on OLMo3-7B-Instruct. On specific sub-tasks, the gains are even more pronounced; for instance, on Qwen3-8B (Biology), VPD achieves 68.00\% compared to GRPO's 62.50\% and SDPO's 61.62\%. Crucially, this performance advantage is intrinsically linked to training stability. Across the SciKnowEval sub-tasks, the single-phase hybrid baselines (SDPO+RL variants) experience severe training instability. We hypothesize that simultaneously updating a policy using unbounded log-ratios and high-variance RL advantages causes catastrophic scale mismatches—a vulnerability entirely bypassed by VPD's decoupled EM formulation. Furthermore, as illustrated in Figure \ref{fig:sciqa}, standard SDPO frequently suffers from late-stage training degradation, where validation accuracy begins to decrease at longer step counts. In contrast, VPD eliminates this issue entirely, exhibiting a highly stable and monotonic convergence curve.

\begin{wraptable}{r}{0.5\linewidth}
\centering
\vspace{-10pt}
\caption{Performance on SciKnowEval using autonomous self-critique as the feedback source.}
\label{tab:judge}
\footnotesize
\setlength{\tabcolsep}{3pt}
\begin{tabular}{llccccc}
\toprule
& & \textbf{Bio.} & \textbf{Chem.} & \textbf{Mat.} & \textbf{Phys.} & \textbf{AVG} \\
\midrule
\multirow{2}{*}{\textbf{Qwen3-1.7B}} 
& SDPO & 62.12 & 78.66 & 71.21 & 58.13 & 67.53 \\
& VPD & 64.75 & 79.43 & 73.54 & 70.31 & 72.01 \\
\midrule
\multirow{2}{*}{\textbf{QWEN3-8B}} 
& SDPO & 61.38 & 82.35 & 83.71 & 72.03 & 74.87 \\
& VPD & 65.38 & 83.52 & 84.20 & 79.45 & 78.14 \\
\bottomrule
\end{tabular}
\end{wraptable}

\textbf{3. Self-Critique via LLM Judge.}
Contrastive sibling rollouts fail when all sampled trajectories for a prompt are incorrect, providing no diagnostic feedback. To overcome this, we evaluate a setting where the model acts as its own judge, generating autonomous critiques for failed trajectories without positive pairs (setup in Appendix \ref{subsec:judge}). This mimics a self-reflection loop: the model diagnoses its own logical errors, and the E-step refines the teacher policy based on this reflection before distilling it back into the student. As Table \ref{tab:judge} shows, self-critique yields performance comparable to contrastive siblings. Crucially, VPD still significantly outperforms SDPO across architectures (e.g., achieving 78.14\% vs. 74.87\% on Qwen3-8B). This viability highlights a compelling future direction: dynamically co-evolving the judge alongside the reasoning policy.

\vspace{-5pt}
\subsection{Ablations and Analysis: The Bounds of Self-Distillation}
\vspace{-2pt}

While VPD successfully stabilizes hybrid training across science and coding domains, our empirical investigation revealed two critical scenarios where all self-distillation methods (including standard SDPO and our VPD) struggle relative to pure RL. We present these findings to illuminate the fundamental, empirical limits of language-driven self-distillation.

\begin{wrapfigure}{r}{0.3\linewidth}
    \centering
    \vspace{-10pt}
    \includegraphics[width=\linewidth]{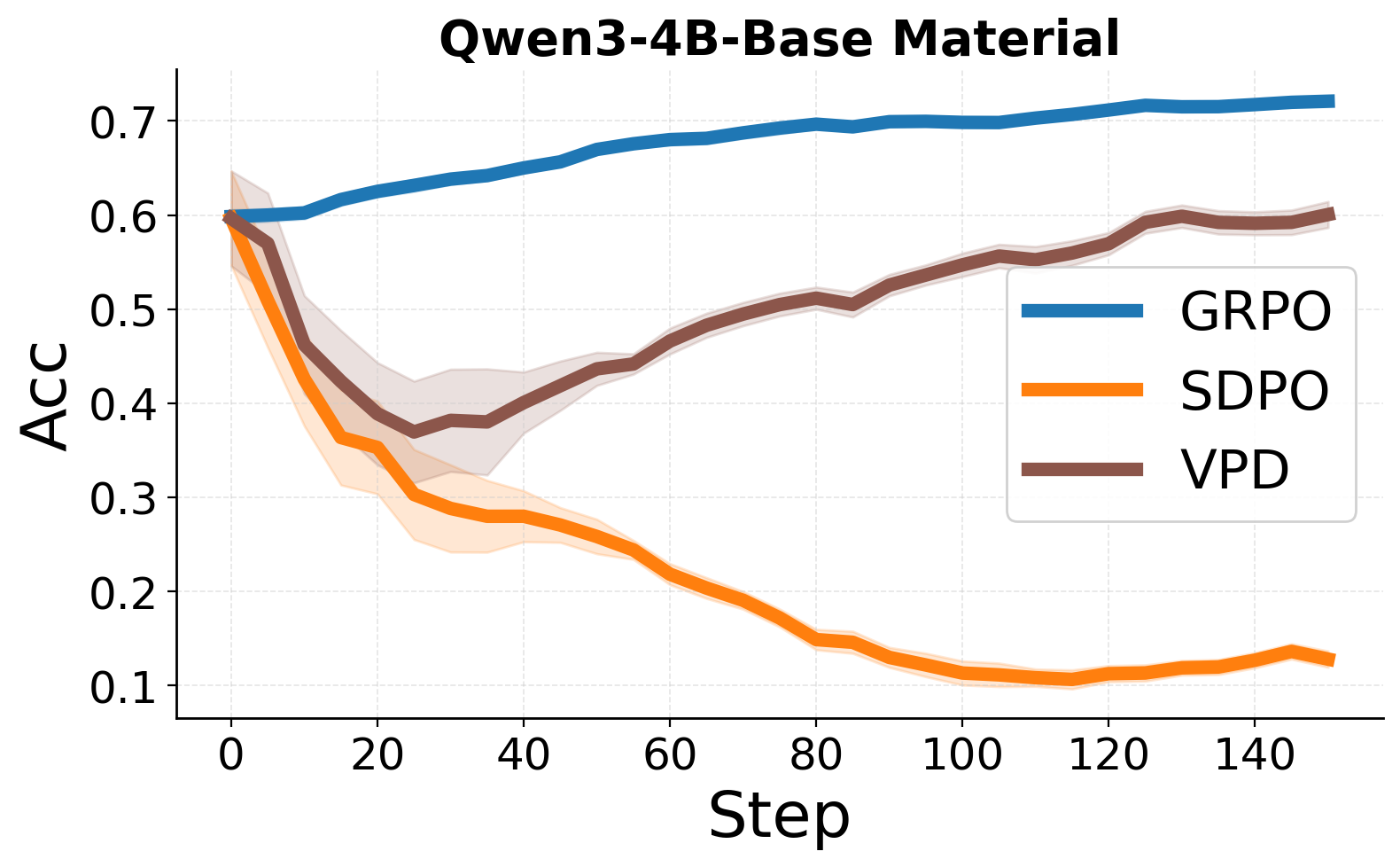}
    \caption{Training progression on Qwen3-4B-Base.}
    \label{fig:base_model}
\end{wrapfigure}

\textbf{The "Cold Start" Problem on Base Models.} 
Recent literature demonstrates that GRPO can elicit advanced reasoning capabilities from a base foundation model. However, when we apply SDPO to base models, performance rapidly collapses to near zero. We hypothesize that self-distillation intrinsically requires the policy to possess a rudimentary level of instruction-following competence; if the base model lacks the capacity to properly digest the diagnostic feedback $\mathcal{C}$ in its prompt, the teacher's target distribution becomes corrupted. While our VPD formulation mitigates this collapse---increasing discrimination power via the E-step updates and significantly delaying the degradation---it still ultimately underperforms GRPO in pure cold-start scenarios. Figure \ref{fig:base_model} illustrates this delayed collapse, confirming that while VPD is far more robust than SDPO, pure RL remains necessary for emergent reasoning elicitation.

\begin{wrapfigure}{r}{0.3\linewidth}
    \centering
    \vspace{-10pt}
    \includegraphics[width=\linewidth]{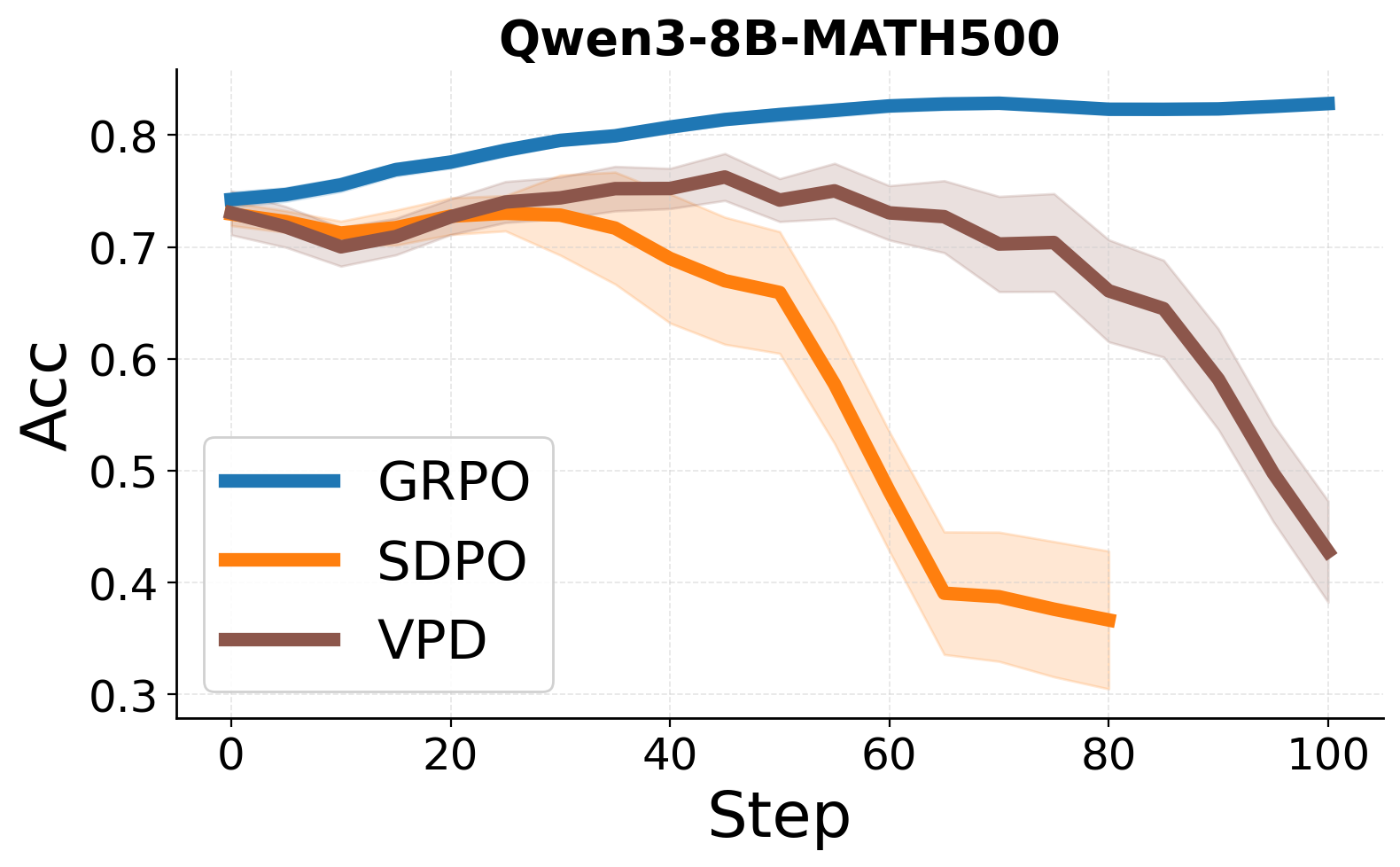}
    \caption{Performance on the Math500 benchmark for models trained on DAPO-Math.}
    \label{fig:math}
    \vspace{-8pt}
\end{wrapfigure}

\textbf{Mathematical Reasoning.} 
Similarly, on challenging mathematical benchmarks (e.g., training on DAPO-Math), SDPO suffers from severe training collapse. This vulnerability to mathematical reasoning domains has been observed in concurrent works \citep{kim2026does}. While VPD again successfully delays this collapse, pure GRPO remains the dominant approach (achieving 83.8\% on Math500 with Qwen3-8B). We hypothesize that this stems from the strict, non-forgiving nature of mathematical derivations. Sparse RL (GRPO) encourages broad exploration, rewarding the model only when it independently discovers a rigorous, fully correct logical path. In contrast, self-distillation forces the student to closely track the teacher's intermediate token distribution. If the teacher's diagnostic feedback is imprecise or flawed, distilling this noisy guidance may overly constrain exploration and inadvertently reinforce incorrect reasoning steps.

\textbf{Computational Efficiency.}
A major practical advantage of VPD is its memory and sampling efficiency. By instantiating both policies within a single shared-weight network, we completely eliminate the massive VRAM overhead of hosting separate teacher models. Furthermore, because the exact same on-policy student rollouts are shared across both the E-step and M-step, VPD incurs zero additional sampling or environment verification costs. While the explicit E-step introduces a gradient-computation overhead compared to standard SDPO—empirically observed as a 30\% to 55\% increase in overall runtime—this temporal cost is effectively managed by our asymmetric update frequency (e.g., one E-step per five M-steps). Ultimately, the significant gains in training stability and final reasoning performance strongly justify this moderate runtime increase, establishing VPD as a highly practical framework.

\begin{table}[ht]
    \centering
    \begin{minipage}[t]{0.51\linewidth}
        \centering
        \caption{Ablation on E-step update frequency using Qwen3-1.7B. $N$ denotes one E-step per $N$ M-steps.}
        \label{tab:freq}
        \footnotesize
        \setlength{\tabcolsep}{3pt}
        \begin{tabular}{lccccc}
        \toprule
         & \textbf{Bio.} & \textbf{Chem.} & \textbf{Mat.} & \textbf{Phys.} & \textbf{AVG} \\
        \midrule
        VPD (F5) & 64.75 & 81.88 & 77.06 & 73.67 & 74.34 \\
        VPD (F1) & 61.12 & 78.51 & 72.47 & 68.75 & 70.21 \\
        VPD (F10) & 59.50 & 79.50 & 71.61 & 66.48 & 69.27 \\
        \bottomrule
        \end{tabular}
    \end{minipage}
    \hfill
    \begin{minipage}[t]{0.46\linewidth}
        \centering
        \caption{Ablation on the E-Step reference prior using Qwen3-1.7B on SciKnowEval.}
        \label{tab:prior}
        \footnotesize
        \setlength{\tabcolsep}{3pt}
        \begin{tabular}{lccccc}
        \toprule
        \textbf{Prior Type} & \textbf{Bio.} & \textbf{Chem.} & \textbf{Mat.} & \textbf{Phys.} & \textbf{AVG} \\
        \midrule
            Dynamic ($\pi_\theta$) & 64.75 & 81.88 & 77.06 & 73.67 & 74.34 \\
            Fixed ($\pi_{\text{ref}}$) & 55.87 & 76.34 & 72.81 & 66.33 & 67.84 \\
        \bottomrule
        \end{tabular}
    \end{minipage}
\end{table}

\textbf{Ablation: E-Step Update Frequency.} 
As discussed in Sec.~\ref{subsec:algorithm}, our shared-weight architecture allows for asymmetric update frequencies. In our main experiments, we perform one E-step update for every 5 M-step gradient updates (F5). To validate this design, we ablate the update frequency on the SciKnowEval benchmark using Qwen3-1.7B. As shown in Table~\ref{tab:freq}, both overly frequent and overly infrequent E-step updates degrade performance. If the teacher is updated too frequently (F1), the target distribution becomes volatile, functioning like a rapidly moving target network in RL that destabilizes the student's distillation phase. Conversely, if the teacher is updated too infrequently (F10), the target distribution becomes stale, preventing the student from receiving dynamically adjusted feedback. The F5 configuration provides the optimal balance, allowing the student sufficient gradient steps to internalize the refined guidance before the teacher advances.

\begin{wrapfigure}{r}{0.3\linewidth}
    \centering
    \includegraphics[width=\linewidth]{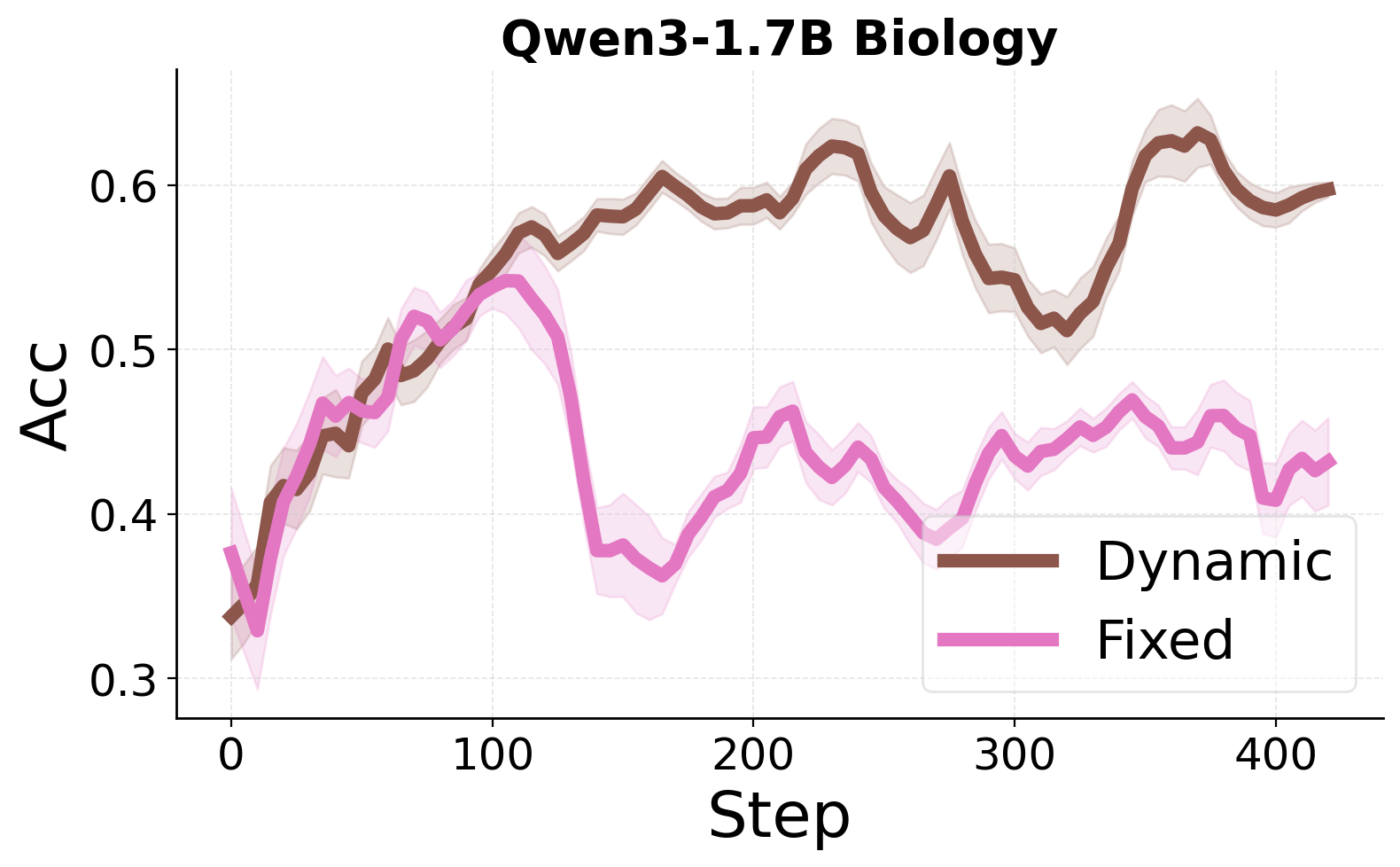}
    \caption{Training progression on Qwen3-1.7B with different reference model for E-Step.}
    \label{fig:prior}
\end{wrapfigure}

\textbf{Ablation: Dynamic Reference Prior.}
As established in Eq.~\ref{eq:effective_implicit_reward}, VPD dynamically anchors the reference prior to the current student policy ($\pi_\theta$). This sliding trust region restricts the teacher's target distribution, ensuring its guidance remains safely reachable for the student. To validate this design, we conduct an ablation study comparing our dynamic prior against a fixed reference model ($\pi_{\text{ref}}$). As shown in Table~\ref{tab:prior}, reverting to a static prior on Qwen3-1.7B not only results in a severe performance drop across the evaluation domains but also severely degrades training stability (see Fig.~\ref{fig:prior}). This confirms our hypothesis: optimizing against a stale prior induces harmful distribution shift, decoupling the teacher's guidance from the student's exploration space and destabilizing the M-step distillation.

\vspace{-5pt}
\section{Related Work}\label{sec:related_work}
\vspace{-6pt}

\textbf{Reinforcement Learning with Verifiable Rewards (RLVR).} 
Recent advancements in large foundation models have popularized Reinforcement Learning with Verifiable Rewards (RLVR) \citep{guo2025deepseek,shao2024deepseekmath,yang2025qwen3}. Unlike traditional RLHF which relies on noisy human preference models \citep{ouyang2022training}, RLVR optimizes LLMs against deterministic, rule-based environments (e.g., Python interpreters, formal theorem provers, or exact-match graders). While RLVR provides unbiased ground-truth signals, it suffers heavily from reward sparsity; a model receives a binary outcome only at the end of a long generative trajectory, making the credit assignment problem exceptionally difficult \citep{kazemnejad2024vineppo,zhang2024generative,zhang2025improving,zheng2025act,liu2025inference}. Our Variational Policy Distillation (VPD) framework fundamentally solves this sparsity issue. Instead of treating the verifiable environment ($\mathcal{E}$) as a black-box scalar reward generator, we extract dense diagnostic feedback ($\mathcal{C}$) from it to condition an intermediate teacher, transforming sparse RLVR into a rich, token-level distillation process.

\textbf{Iterative Self-Improvement and Preference Optimization.} 
A rapidly growing body of literature explores allowing LLMs to iteratively self-improve using their own generated rollouts. Frameworks like Self-Play Fine-Tuning (SPIN) \citep{chen2024self} pit a model against its previous iterations, while Self-Rewarding LLMs \citep{yuan2024self} prompt the model to act as its own reward judge. To bypass the instabilities of policy gradients (e.g., PPO), recent methods iteratively apply Direct Preference Optimization (DPO) \citep{rafailov2023direct} or leverage unpaired preference learning like Kahneman-Tversky Optimization (KTO) \citep{ethayarajh2024kto} and Binary Classifier Optimization (BCO) \citep{jung2025binary}. However, unconstrained self-improvement risks severe distribution shift and mode collapse, as the target distribution can easily decouple from the student's active exploration space.

\textbf{Privileged Information and On-Policy Self-Distillation.} 
To mitigate distribution shift and leverage privileged information without relying on external oracles, recent research has shifted toward On-Policy Self-Distillation. Methods such as On-Policy Self-Distillation (OPSD) \citep{zhao2026self} condition a teacher on ground-truth answers to generate distillation targets. Similarly, On-Policy Context Distillation (OPCD) \citep{ye2026policy,ye2026online} distills historical solution traces and system-prompt behaviors into model weights, while Self-Distillation Policy Optimization (SDPO) \citep{hubotter2026reinforcement} conditions a self-teacher on diagnostic feedback for dense credit assignment. A major limitation of these existing methods is that they typically treat the conditionally prompted teacher as a fixed, heuristic oracle during the update step. Conceptually closer to our work is $\pi$-Distill \citep{penaloza2026privileged}, which moves beyond a static oracle by jointly learning a teacher (via GRPO) and a student. However, it treats the teacher's rollouts as disconnected off-policy samples to subsequently fine-tune the student. 

VPD fundamentally departs from all these approaches by embedding the self-distillation process within a rigorous Expectation-Maximization (EM) framework. Rather than blindly distilling from a static prompt or relying on ad-hoc off-policy sampling, our E-step actively \emph{optimizes} the teacher to sharply distinguish successful from failed trajectories. Crucially, this optimization is mathematically constrained within a dynamic trust region. This guarantees that the teacher's target distribution is not only highly informative but remains safely anchored to the student's local support, ensuring stable M-step distillation and preventing the vanishing gradients that plague standard self-distillation. More broadly, while concurrent work explores variational inference for LLM reasoning \citep{zhou2025variational}, VPD uniquely formulates this latent-variable optimization as a teacher-student distillation problem.

\textbf{Learning from Language Feedback.} 
Moving beyond sparse scalar rewards, a rapidly expanding body of work explores natural language as a denser, more informative supervision signal. Theoretical foundations have recently been laid by tracking LLM hypothesis spaces and utilizing the transfer eluder dimension to quantify feedback information \citep{xu2025provably}, as well as by leveraging natural language for explicit value estimation and policy improvement \citep{feng2024natural}. In practice, methods integrate feedback through various mechanisms: some utilize verbal reflections for temporal credit assignment \citep{chen2026retrospective,zhong2024policy,yang2025deepcritic}, while others treat feedback as textual gradients to iteratively evolve responses \citep{lee2025feedback,song2026expanding}. Another prominent strategy conditions models directly on the feedback itself during fine-tuning, as seen in Chain of Hindsight \citep{liu2023chain}, Imitation learning from Language Feedback (ILF) \citep{scheurer2023training}, and Feedback-Conditional Policy (FCP) \citep{luo2025language}.

While these approaches successfully harness rich linguistic signals, they typically require training separate, computationally expensive critique models or rely on unconstrained, heuristic fine-tuning. VPD streamlines this paradigm. By fusing the feedback-aware teacher and the student into a single, shared-weight network ($\theta = \phi$), we embed dense language critiques ($\mathcal{C}$) directly into a mathematically bounded Expectation-Maximization cycle, achieving the benefits of language feedback without the traditional memory overhead or instability.

\section{Conclusion}\label{sec:conclusion}

In this work, we introduced Variational Policy Distillation (VPD), a framework that reframes self-distillation as a principled, co-evolutionary Expectation-Maximization (EM) algorithm. Rather than relying on frozen heuristic oracles or static reference models, VPD jointly trains a feedback-aware teacher alongside an unconditioned student. During the E-step, the framework actively updates the teacher via unpaired preference learning (BCO) to extract sharper insights from environmental critiques ($\mathcal{C}$). Crucially, by anchoring this evolving teacher to the active student policy, we enforce a dynamic trust-region that ensures the subsequent M-step distillation remains highly informative yet safely within the student's learning capacity. We execute this complex cycle highly efficiently through a unified, shared-weight architecture ($\phi = \theta$), completely eliminating the massive memory overhead of traditional multi-model paradigms. Our theoretical formulation and empirical evaluations demonstrate that this approach yields stable optimization dynamics and consistent performance gains on complex reasoning tasks, outperforming static reference methods like standard SDPO. 

\textbf{Limitations and Future Work.} Despite these advancements, our framework has notable limitations. First, while VPD significantly improves upon standard self-distillation and successfully delays training collapse in challenging scenarios, it still ultimately lags behind pure sparse RL (e.g., GRPO) in mathematical reasoning and base-model cold-starts. This highlights an inherent limitation in relying on unstructured textual feedback, which can sometimes be too noisy or imprecise for strict logical derivations. Second, while our shared-weight parameterization ($\phi = \theta$) maximizes computational efficiency, it strictly bounds the teacher's representational capacity to the student's architecture. In highly complex domains, this strict coupling may prevent the teacher from sufficiently approximating the optimal posterior. Moving forward, future work should explore decoupled or partially decoupled parameterizations (e.g., using parameter-efficient fine-tuning for the teacher) to expand its diagnostic capacity without reintroducing massive memory overheads. Additionally, analyzing the theoretical bounds of the EM framework under varying degrees of feedback noise, and extending this co-evolutionary process to continual learning scenarios, represent exciting frontiers for developing autonomous, self-improving agents.

\bibliographystyle{plain}
\bibliography{neurips_2026}


\newpage
\appendix

\setcounter{equation}{0}
\setcounter{figure}{0}
\setcounter{table}{0}
\renewcommand{\theequation}{\thesection.\arabic{equation}}
\renewcommand{\thefigure}{\thesection.\arabic{figure}}
\renewcommand{\thetable}{\thesection.\arabic{table}}

\section{Theoretical Derivations}\label{sec:theory_app}

This appendix provides the formal derivations for the variational framework introduced in Section \ref{sec:method}. We first derive the closed-form optimal policy under the KL-regularized RLVR objective, demonstrate its equivalence to minimizing the reverse KL divergence, and finally establish the Expectation-Maximization (EM) lower bound that justifies the alternating optimization of the teacher $q_\phi$ and student $\pi_\theta$.

\subsection{Derivation of the Optimal Target Distribution}\label{app:optimal_policy}

The standard KL-regularized RLVR objective for a given context $x$ is defined as:
\begin{equation}
    \mathcal{J}(\pi) = \mathbb{E}_{y \sim \pi(\cdot|x)} \left[ r(x, y) \right] - \beta \KL \left( \pi(\cdot \mid x) \parallel \pi_{\text{ref}}(\cdot \mid x) \right).
\end{equation}

We wish to find the optimal policy $\pi^*$ that maximizes $\mathcal{J}(\pi)$ subject to the probability constraint $\sum_y \pi(y \mid x) = 1$. We formulate this constrained optimization problem using the method of Lagrange multipliers. The Lagrangian is given by:
\begin{equation}
    \mathcal{L}(\pi, \lambda) = \sum_y \pi(y \mid x) r(x, y) - \beta \sum_y \pi(y \mid x) \log \frac{\pi(y \mid x)}{\pi_{\text{ref}}(y \mid x)} - \lambda \left( \sum_y \pi(y \mid x) - 1 \right).
\end{equation}

Taking the partial derivative of $\mathcal{L}$ with respect to $\pi(y \mid x)$ and setting it to zero yields:
\begin{equation}
    \frac{\partial \mathcal{L}}{\partial \pi(y \mid x)} = r(x, y) - \beta \left( \log \frac{\pi(y \mid x)}{\pi_{\text{ref}}(y \mid x)} + 1 \right) - \lambda = 0.
\end{equation}

Solving for $\pi(y \mid x)$:
\begin{equation}
    \log \frac{\pi(y \mid x)}{\pi_{\text{ref}}(y \mid x)} = \frac{1}{\beta} r(x, y) - 1 - \frac{\lambda}{\beta} \implies \pi^*(y \mid x) = \pi_{\text{ref}}(y \mid x) \exp \left( \frac{1}{\beta} r(x, y) \right) \exp \left( -1 - \frac{\lambda}{\beta} \right).
\end{equation}

To eliminate the Lagrange multiplier $\lambda$, we apply the constraint that the probabilities must sum to 1. This reveals that the term $\exp(-1 - \frac{\lambda}{\beta})$ acts as a normalization constant. We define the partition function $Z(x)$ as:
\begin{equation}
    Z(x) = \sum_y \pi_{\text{ref}}(y \mid x) \exp \left( \frac{1}{\beta} r(x, y) \right).
\end{equation}

Thus, the optimal target distribution is the exponentially reward-tilted policy:
\begin{equation}
    \pi^*(y \mid x) = \frac{1}{Z(x)} \pi_{\text{ref}}(y \mid x) \exp \left( \frac{1}{\beta} r(x, y) \right).
    \label{eq:app_pi_star}
\end{equation}

\subsection{Equivalence of Reverse KL and the RLVR Objective}\label{app:kl_equivalence}

We now demonstrate that minimizing the reverse KL divergence between the student policy $\pi_\theta$ and the optimal policy $\pi^*$ is mathematically equivalent to maximizing the original RLVR objective. Expanding the definition of the KL divergence:
\begin{equation}
    \KL(\pi_\theta \parallel \pi^*) = \mathbb{E}_{y \sim \pi_\theta} \left[ \log \pi_\theta(y \mid x) - \log \pi^*(y \mid x) \right].
    \label{eq:optimal_kl_obj}
\end{equation}

Substituting the definition of $\pi^*(y \mid x)$ from Equation \ref{eq:app_pi_star}:
\begin{align}
    \KL(\pi_\theta \parallel \pi^*) &= \mathbb{E}_{y \sim \pi_\theta} \left[ \log \pi_\theta(y \mid x) - \log \left( \frac{1}{Z(x)} \pi_{\text{ref}}(y \mid x) \exp \left( \frac{1}{\beta} r(x, y) \right) \right) \right] \nonumber \\
    &= \mathbb{E}_{y \sim \pi_\theta} \left[ \log \pi_\theta(y \mid x) - \log \pi_{\text{ref}}(y \mid x) - \frac{1}{\beta} r(x, y) + \log Z(x) \right] \nonumber \\
    &= \KL(\pi_\theta \parallel \pi_{\text{ref}}) - \frac{1}{\beta} \mathbb{E}_{y \sim \pi_\theta} \left[ r(x, y) \right] + \log Z(x).
\end{align}

Multiplying both sides by $-\beta$, we observe:
\begin{equation}
    -\beta \KL(\pi_\theta \parallel \pi^*) = \mathbb{E}_{y \sim \pi_\theta} \left[ r(x, y) \right] - \beta \KL(\pi_\theta \parallel \pi_{\text{ref}}) - \beta \log Z(x).
    \label{eq:kl_to_rlvr}
\end{equation}

Since the term $\beta \log Z(x)$ depends only on the environment and $\pi_{\text{ref}}$, it is a constant with respect to the policy parameters $\theta$. Therefore, minimizing the reverse KL divergence exactly maximizes the original objective $\mathcal{J}(\theta)$.

\subsection{Variational Expectation-Maximization Lower Bound}
\label{app:em_bound}

Because $Z(x)$ is analytically intractable, computing the exact log-likelihood of $\pi^*$ is impossible. We introduce a tractable teacher distribution $q_\phi(y \mid x, \mathcal{C})$ conditioned on the dense language feedback $\mathcal{C}$ to act as an approximate posterior. 

To evaluate the quality of this approximation, we look to the log-partition function, $\log Z(x)$. By rearranging the equivalence established in Equation \ref{eq:kl_to_rlvr}, we can express the original RL objective as:
\begin{equation}\label{eq:log_part_bound}
    \mathcal{J}(\pi) = \beta \log Z(x) - \beta \KL(\pi \parallel \pi^*).
\end{equation}
Because the KL divergence is strictly non-negative, the maximum possible value of the objective function is exactly $\beta \log Z(x)$, achieved only when $\pi$ perfectly matches $\pi^*$. Therefore, $\log Z(x)$ represents the absolute theoretical ceiling of performance under the KL constraint.

To bound this intractable ceiling, we multiply and divide the terms inside $Z(x)$ by our teacher distribution $q_\phi(y \mid x, \mathcal{C})$:
\begin{equation}
    \log Z(x) = \log \sum_y q_\phi(y \mid x, \mathcal{C}) \frac{\pi_{\text{ref}}(y \mid x) \exp \left( \frac{1}{\beta} r(x, y) \right)}{q_\phi(y \mid x, \mathcal{C})}.
\end{equation}

Applying Jensen's Inequality, we establish the Evidence Lower Bound (ELBO), denoted as $\mathcal{F}(q_\phi)$:
\begin{align}
    \log Z(x) &\ge \sum_y q_\phi(y \mid x, \mathcal{C}) \log \frac{\pi_{\text{ref}}(y \mid x) \exp \left( \frac{1}{\beta} r(x, y) \right)}{q_\phi(y \mid x, \mathcal{C})} \nonumber \\
    &= \frac{1}{\beta} \mathbb{E}_{y \sim q_\phi} \left[ r(x, y) \right] - \KL(q_\phi \parallel \pi_{\text{ref}}) \equiv \mathcal{F}(q_\phi).
\end{align}

To explicitly derive the gap between this lower bound and the true optimal log-partition function, we expand the KL divergence between our approximate posterior $q_\phi$ and the optimal policy $\pi^*$:
\begin{equation}
    \KL(q_\phi \parallel \pi^*) = \mathbb{E}_{y \sim q_\phi} \left[ \log q_\phi(y \mid x, \mathcal{C}) - \log \pi^*(y \mid x) \right].
\end{equation}

Substituting the definition of $\pi^*(y \mid x)$ from Equation \ref{eq:app_pi_star}:
\begin{align}
    \KL(q_\phi \parallel \pi^*) &= \mathbb{E}_{y \sim q_\phi} \left[ \log q_\phi(y \mid x, \mathcal{C}) - \log \left( \frac{1}{Z(x)} \pi_{\text{ref}}(y \mid x) \exp \left( \frac{1}{\beta} r(x, y) \right) \right) \right] \nonumber \\
    &= \mathbb{E}_{y \sim q_\phi} \left[ \log q_\phi(y \mid x, \mathcal{C}) - \log \pi_{\text{ref}}(y \mid x) - \frac{1}{\beta} r(x, y) + \log Z(x) \right].
\end{align}

By separating the terms, we can reformulate this in terms of the ELBO:
\begin{align}
    \KL(q_\phi \parallel \pi^*) &= \log Z(x) + \mathbb{E}_{y \sim q_\phi} \left[ \log \frac{q_\phi(y \mid x, \mathcal{C})}{\pi_{\text{ref}}(y \mid x)} \right] - \frac{1}{\beta} \mathbb{E}_{y \sim q_\phi} \left[ r(x, y) \right] \nonumber \\
    &= \log Z(x) + \KL(q_\phi \parallel \pi_{\text{ref}}) - \frac{1}{\beta} \mathbb{E}_{y \sim q_\phi} \left[ r(x, y) \right] \nonumber \\
    &= \log Z(x) - \left( \frac{1}{\beta} \mathbb{E}_{y \sim q_\phi} \left[ r(x, y) \right] - \KL(q_\phi \parallel \pi_{\text{ref}}) \right).
\end{align}

Recognizing the term inside the parentheses as exactly our definition of $\mathcal{F}(q_\phi)$, we arrive at the fundamental decomposition:
\begin{equation}\label{eq:log_part_decomp}
    \log Z(x) = \mathcal{F}(q_\phi) + \KL(q_\phi \parallel \pi^*).
\end{equation}

Because $\log Z(x)$ is a constant with respect to $\phi$, this derivation proves that \textbf{maximizing the tractable lower bound $\mathcal{F}(q_\phi)$ is mathematically equivalent to minimizing the divergence $\KL(q_\phi \parallel \pi^*)$.} This establishes the following E-Step optimization:

\begin{tcolorbox}[colback=gray!5!white, colframe=gray!50!black, title=\textbf{E-Step: Teacher Refinement}]
We optimize the teacher parameters $\phi$ to maximize $\mathcal{F}(q_\phi)$ (effectively minimizing $\KL(q_\phi \parallel \pi^*)$). As derived in Section \ref{subsec:e_step}, we approximate this optimization using Binary Classifier Optimization on the implicit rewards.
\end{tcolorbox}

To mathematically justify the M-step, we must connect this updated teacher back to our global objective: minimizing $\KL(\pi_\theta \parallel \pi^*)$. By adding and subtracting $\log q_\phi(y \mid x, \mathcal{C})$ inside the expectation of this global objective, we can decompose it as follows:
\begin{align}\label{eq:m_step_decomp}
    \KL(\pi_\theta \parallel \pi^*) &= \mathbb{E}_{y \sim \pi_\theta} \left[ \log \pi_\theta(y \mid x) - \log \pi^*(y \mid x) \right] \nonumber \\
    &= \mathbb{E}_{y \sim \pi_\theta} \left[ \log \pi_\theta(y \mid x) - \log q_\phi(y \mid x, \mathcal{C}) + \log q_\phi(y \mid x, \mathcal{C}) - \log \pi^*(y \mid x) \right] \nonumber \\
    &= \KL(\pi_\theta \parallel q_\phi) + \mathbb{E}_{y \sim \pi_\theta} \left[ \log \frac{q_\phi(y \mid x, \mathcal{C})}{\pi^*(y \mid x)} \right].
\end{align}

While the E-step cannot guarantee that $q_\phi$ perfectly matches the optimal policy $\pi^*$ due to finite network capacity, it actively minimizes the residual error represented by the second term. By treating the optimized $q_\phi$ as the tightest tractable surrogate for the optimal policy, the M-step circumvents the intractable global objective by instead minimizing the proxy divergence $\KL(\pi_\theta \parallel q_\phi)$. This establishes the following M-Step optimization:

\begin{tcolorbox}[colback=gray!5!white, colframe=gray!50!black, title=\textbf{M-Step: Student Distillation}]
Operating on the mathematical decomposition above, we project this refined knowledge back into the student policy space. By minimizing $\KL(\pi_\theta \parallel q_\phi)$ over the student's on-policy rollouts, we strictly minimize the remaining divergence against our optimal surrogate, completing the distillation cycle without requiring the privileged feedback $\mathcal{C}$ at inference time.
\end{tcolorbox}

\subsection{Theoretical Implications of the Dynamic Reference Model}\label{app:dynamic_prior}

As established in Equation \ref{eq:effective_implicit_reward}, we define the effective implicit reward using a dynamic prior by setting the reference model to the current student policy, $\pi_\theta$. This formulation is mathematically equivalent to defining an optimal target distribution $\pi^*_{\text{dyn}}$ where the student acts as the base prior:
\begin{equation}
    \pi^*_{\text{dyn}}(y \mid x, \mathcal{C}) = \frac{1}{Z_{\text{dyn}}(x)} \pi_\theta(y \mid x) \exp \left( \frac{r(x, y)}{\beta} \right)
\end{equation}
where $Z_{\text{dyn}}(x)$ is the partition function. 

To deeply understand the regularizing effect of this dynamic prior, we analyze the E-step objective from two complementary mathematical perspectives: its relationship to standard static priors, and its role as an adaptive trust region.

\textbf{Perspective 1: The Adaptive Alignment Bonus.} First, we can contrast our dynamic target directly against the standard optimal target $\pi^*$ introduced previously in Equation \ref{eq:optimal_policy}, which utilizes a fixed, static prior anchored to the initial reference model $\pi_{\text{ref}}$. We can analytically express our new dynamic target as a probability-ratio scaled version of this static target:
\begin{equation}
    \pi^*_{\text{dyn}}(y \mid x, \mathcal{C}) = \left( \frac{Z_{\text{stat}}(x)}{Z_{\text{dyn}}(x)} \right) \frac{\pi_\theta(y \mid x)}{\pi_{\text{ref}}(y \mid x)} \pi^*(y \mid x)
\end{equation}

Because our E-step objective is to minimize the KL divergence between the teacher policy $q_\phi$ and this dynamic optimal target, we can substitute the ratio expansion into the objective:
\begin{align}
    \min_\phi D_{\text{KL}}(q_\phi \parallel \pi^*_{\text{dyn}}) &= \mathbb{E}_{q_\phi} \left[ \log \frac{q_\phi(y \mid x, \mathcal{C})}{\pi^*_{\text{dyn}}(y \mid x, \mathcal{C})} \right] \nonumber \\
    &= \mathbb{E}_{q_\phi} \left[ \log \frac{q_\phi}{\pi^*} - \log \frac{\pi_\theta}{\pi_{\text{ref}}} - \log \frac{Z_{\text{stat}}}{Z_{\text{dyn}}} \right] \nonumber \\
    &= D_{\text{KL}}(q_\phi \parallel \pi^*) - \mathbb{E}_{q_\phi} \left[ \log \frac{\pi_\theta(y \mid x)}{\pi_{\text{ref}}(y \mid x)} \right] + C.
\end{align}

This derivation reveals a profound insight into how VPD stabilizes co-evolution. The E-step objective is mathematically equivalent to the standard, static objective, but with a critical additional penalty term: $-\mathbb{E}_{q_\phi} [\log (\pi_\theta / \pi_{\text{ref}})]$. This extra term acts as an adaptive alignment bonus. It explicitly incentivizes the teacher to upweight trajectories where the current student ($\pi_\theta$) assigns a higher likelihood than the base model ($\pi_{\text{ref}}$). Instead of rigidly pulling the teacher toward an absolute global optimum that may be conceptually out of reach for the student, the teacher actively anchors its guidance to the pathways the student has already begun to master. 

\textbf{Perspective 2: The Sliding Trust Region.} Alternatively, to explicitly demonstrate how this dynamic prior restricts the teacher's updates, we can expand the E-step KL divergence directly:
\begin{align}
    \min_\phi D_{\text{KL}}(q_\phi \parallel \pi^*_{\text{dyn}}) &= \mathbb{E}_{q_\phi} \left[ \log \frac{q_\phi(y \mid x, \mathcal{C})}{\pi^*_{\text{dyn}}(y \mid x, \mathcal{C})} \right] \nonumber \\
    &= \mathbb{E}_{q_\phi} \left[ \log q_\phi(y \mid x, \mathcal{C}) - \log \left( \frac{1}{Z_{\text{dyn}}(x)} \pi_\theta(y \mid x) \exp\left(\frac{r(x, y)}{\beta}\right) \right) \right] \nonumber \\
    &= \mathbb{E}_{q_\phi} \left[ \log \frac{q_\phi(y \mid x, \mathcal{C})}{\pi_\theta(y \mid x)} - \frac{r(x, y)}{\beta} + \log Z_{\text{dyn}}(x) \right] \nonumber \\
    &= D_{\text{KL}}(q_\phi \parallel \pi_\theta) - \frac{1}{\beta} \mathbb{E}_{q_\phi} [r(x, y)] + \log Z_{\text{dyn}}(x).
\end{align}

Because $\log Z_{\text{dyn}}(x)$ is a constant with respect to the teacher's parameters $\phi$, minimizing this KL divergence is mathematically identical to solving the following constrained optimization problem:
\begin{equation}\label{eq:trust_region}
    \max_\phi \mathbb{E}_{q_\phi} [r(x, y)] - \beta D_{\text{KL}}(q_\phi \parallel \pi_\theta).
\end{equation}

This direct expansion exposes the profound regularizing effect of the dynamic prior. By anchoring the implicit reward to the current student $\pi_\theta$, the E-step objective naturally produces a strict KL penalty against the student's active distribution. Our dynamic formulation fundamentally acts as a sliding trust region. It enforces a geometric boundary: the teacher is permitted to adjust its distribution to maximize the diagnostic language feedback, but it is mathematically penalized for proposing targets that fall outside the student's current active learning capacity. This ensures the dense distributional guidance provided during the M-step is always a safely reachable "next step," thereby eliminating the catastrophic training collapse observed in standard single-phase hybrid methods.

\textbf{Stable Inner Loops and Dynamic Outer Loops.} Finally, it is important to distinguish the stationary nature of the E-step from the global trajectory of training. Because the student parameters $\theta$ are continuously updated during the M-step, the optimal target $\pi^*_{\text{dyn}}$ dynamically shifts across the overall training process. However, the reference likelihoods $\pi_\theta(y \mid x)$ are pre-calculated and frozen before each individual E-step begins. This ensures that within the bounds of any specific E-step, the target remains strictly stationary, allowing for highly stable gradient descent while still supporting the continuous co-evolution of the policies.

\section{Method}\label{sec:method_app}

\subsection{VPD Algorithm}

\begin{algorithm}[h]
\caption{Variational Policy Distillation via Co-Evolutionary EM}
\label{alg:em_vpd}
\begin{algorithmic}[1]
\REQUIRE Initialized model parameters $\theta_0$, Prompt dataset $\mathcal{D}$, Verifiable Environment $\mathcal{E}$, KL coefficient $\beta$, Number of iterations $K$
\FOR{$k = 1, 2, \dots, K$}
    \STATE \textbf{// 1. On-Policy Rollout}
    \STATE Sample a batch of prompts $x \sim \mathcal{D}$.
    \STATE Generate student trajectories $y \sim \pi_{\theta_{k-1}}(\cdot \mid x)$.
    
    \STATE \textbf{// 2. Environment Critique}
    \FOR{each trajectory $y$}
        \STATE Evaluate $y$ using $\mathcal{E}$ to obtain binary outcome $r(x,y)$ and diagnostic feedback $\mathcal{C}$.
    \ENDFOR
    \STATE Partition trajectories and feedback into successes $\{(y^+, \mathcal{C}^+)\}$ and failures $\{(y^-, \mathcal{C}^-)\}$
    
    \STATE \textbf{// 3. E-Step: Teacher Refinement (Unpaired Preference)}
    \STATE Synchronize teacher alias with current student weights: $\phi_k \leftarrow \theta_{k-1}$.
    \STATE Compute the dynamic reward shift $\delta = \frac{1}{2}(\mathbb{E}[\tilde{r}_{\phi_k}(x, y^+, \mathcal{C}^+)] + \mathbb{E}[\tilde{r}_{\phi_k}(x, y^-, \mathcal{C}^-)])$.
    \STATE Update parameters to obtain refined teacher $\phi'_k$ by minimizing $\mathcal{L}_{\text{E-step}}$ (Eq.~\ref{eq:e_step_final}).
    
    \STATE \textbf{// 4. M-Step: Student Distillation (Information Projection)}
    \STATE Update student $\theta_k$ by minimizing $\mathcal{L}_{\text{M-step}}$ against the stop-gradient teacher $\text{sg}[q_{\phi'_k}]$ (Eq.~\ref{eq:m_step_final}).
\ENDFOR
\RETURN Optimized policy $\theta_K$
\end{algorithmic}
\end{algorithm}

\subsection{Convergence Properties of the VPD EM Procedure}
\label{app:convergence}

A natural concern with any EM-style algorithm is whether alternating E- and M-steps yields monotonic progress toward the global objective. We formalize this below and connect it to the practical setting where each step performs only approximate gradient updates.

\begin{proposition}[Monotonic ELBO Improvement]\label{prop:monotone}
Let $\theta_k$ denote the student parameters at the start of iteration $k$. Suppose the E-step and M-step each attain their respective global optima within iteration $k$. Then the RLVR objective $\mathcal{J}(\theta_k)$ (Eq.~\ref{eq:rlvr}) is non-decreasing across iterations: $\mathcal{J}(\theta_{k+1}) \geq \mathcal{J}(\theta_k)$.
\end{proposition}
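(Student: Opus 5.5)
The plan is to reduce the claim to the identity in Eq.~\eqref{eq:log_part_bound}, $\mathcal{J}(\pi) = \beta \log Z(x) - \beta \KL(\pi \parallel \pi^*)$. Since $\beta\log Z(x)$ does not depend on $\theta$, it suffices to show $\KL(\pi_{\theta_{k+1}} \parallel \pi^*) \le \KL(\pi_{\theta_k} \parallel \pi^*)$ for each $x$, and then take the expectation over $x\sim\mathcal{D}$.

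I read ``global optima'' in the usual idealized EM sense. The teacher family $\{q_\phi(\cdot\mid x,\mathcal{C})\}$ is assumed rich enough to realize the E-step optimum, and the M-step returns a global minimizer over the student family. I will state this realizability explicitly as part of the hypothesis.

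\textbf{Step 1 (E-step).} By the decomposition \eqref{eq:log_part_decomp}, $\log Z(x) = \mathcal{F}(q_\phi) + \KL(q_\phi\parallel\pi^*)$. Hence the global maximizer of $\mathcal{F}$ is the $q$ attaining $\KL(q\parallel\pi^*)=0$, namely $q_{\phi'_k}(\cdot\mid x,\mathcal{C}) = \pi^*(\cdot\mid x)$. The conditioning on $\mathcal{C}$ plays no role at the optimum because $\pi^*$ does not depend on $\mathcal{C}$.

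\textbf{Step 2 (M-step).} I will note that the token-level objective \eqref{eq:m_step_final} equals the sequence-level $\KL(\pi_\theta\parallel q_{\phi'_k})$ by the chain rule for KL divergence. Its expectation under $y\sim\pi_\theta$ of the per-step KL sums to the trajectory KL. Using Step 1, the M-step therefore minimizes $\KL(\pi_\theta\parallel\pi^*)$ exactly. Equivalently, the residual term in \eqref{eq:m_step_decomp} vanishes. A global minimizer $\theta_{k+1}$ satisfies $\KL(\pi_{\theta_{k+1}}\parallel\pi^*) \le \KL(\pi_{\theta_k}\parallel\pi^*)$, because $\theta_k$ is itself a feasible candidate. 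Substituting into \eqref{eq:log_part_bound} gives $\mathcal{J}(\theta_{k+1})\ge\mathcal{J}(\theta_k)$.

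\textbf{Step 3 (dynamic prior).} The main obstacle is that the E-step actually used is anchored to $\pi_{\theta_k}$ rather than $\pi_{\text{ref}}$ (Appendix~\ref{app:dynamic_prior}). Its optimum is therefore $\pi^*_{\text{dyn},k}\propto\pi_{\theta_k}e^{r/\beta}$, not $\pi^*$. For this variant I would redo Step 2 with the trust-region expansion: $\KL(\pi\parallel\pi^*_{\text{dyn},k}) = \KL(\pi\parallel\pi_{\theta_k}) - \tfrac1\beta\E_\pi[r] + \log Z_{\text{dyn}}$. Optimality of $\theta_{k+1}$ against the candidate $\theta_k$ then gives
\[
\E_{\pi_{\theta_{k+1}}}[r] \ge \E_{\pi_{\theta_k}}[r] + \beta\KL(\pi_{\theta_{k+1}}\parallel\pi_{\theta_k}) \ge \E_{\pi_{\theta_k}}[r].
\]
This is a mirror-descent style monotone improvement of the expected reward. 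To obtain monotonicity of the full $\mathcal{J}$ including the $\pi_{\text{ref}}$ penalty, I would try to bound $\KL(\pi_{\theta_{k+1}}\parallel\pi_{\text{ref}}) - \KL(\pi_{\theta_k}\parallel\pi_{\text{ref}})$ by the trust-region term. If that fails, I would state the proposition for the static-prior E-step, as in Steps 1--2, and present the dynamic-prior case as the weaker corollary on expected reward.
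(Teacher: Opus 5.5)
Your argument is correct once the realizability hypothesis you add is in force. It is not the paper's route, and the difference is substantive.

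The paper's Step~2 starts from the same decomposition $\KL(\pi_\theta\parallel\pi^*)=\KL(\pi_\theta\parallel q_{\phi'_k})+\E_{\pi_\theta}[\log(q_{\phi'_k}/\pi^*)]$. It then asserts that the residual is constant in $\theta$ once $q_{\phi'_k}$ is frozen. Since the expectation is taken under $\pi_\theta$, this is false, and for a general teacher the M-step can increase $\KL(\pi_\theta\parallel\pi^*)$.

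You instead use global optimality of the E-step to make the residual vanish identically ($q_{\phi'_k}=\pi^*$). The M-step objective then \emph{is} $\KL(\pi_\theta\parallel\pi^*)$, and comparing with the feasible candidate $\theta_k$ closes the step. This gives an actual proof. The price is realizability, which is necessary rather than cosmetic: if the teacher family cannot reach $\pi^*$ while $\pi_{\theta_k}=\pi^*$ already, a sufficiently rich student moves onto the suboptimal teacher and $\mathcal{J}$ strictly drops.

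Two small tightenings:
\begin{enumerate}
\item Because $\theta$ is shared across prompts, the candidate comparison gives the inequality for $\E_{x\sim\mathcal{D}}\KL(\pi_\theta(\cdot\mid x)\parallel\pi^*(\cdot\mid x))$, not for each $x$. The averaged inequality is all $\mathcal{J}$ needs, so state the reduction at that level.
\item $\mathcal{C}$ is computed from the rollout $y$ itself. The token-level loss in Eq.~\ref{eq:m_step_final} therefore reduces to a sequence-level KL by the chain rule only after Step~1 has removed the teacher's dependence on $\mathcal{C}$. Apply the chain rule after Step~1, not before.
\end{enumerate}

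Your Step~3 pinpoints the other defect of the paper's proof: its Step~1 uses the dynamic ELBO anchored at $\pi_{\theta_k}$, while its Step~2 uses the static target $\pi^*$. The bound you plan to try cannot succeed, because for the dynamic-prior algorithm the proposition is false.
\begin{itemize}
\item With rich families and $\pi_{\theta_0}=\pi_{\text{ref}}$, the exact iterates are $\pi_{\theta_k}\propto\pi_{\text{ref}}\,e^{kr/\beta}$, so $\pi_{\theta_1}=\pi^*$ already maximizes $\mathcal{J}$.
\item Along the tilted family $\pi_\lambda\propto\pi_{\text{ref}}e^{\lambda r}$, one has $\frac{d}{d\lambda}\KL(\pi_\lambda\parallel\pi_{1/\beta})=(\lambda-1/\beta)\,\mathrm{Var}_{\pi_\lambda}(r)$. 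This is strictly positive for $\lambda>1/\beta$ whenever $r$ is non-constant on the support of $\pi_{\text{ref}}$.
\item Hence $\mathcal{J}(\theta_k)$ strictly decreases for $k\ge1$.
\end{itemize}
Your fallback is therefore the right formulation. For the static-prior E-step you get monotonicity of $\mathcal{J}$. For the dynamic prior actually used in Algorithm~\ref{alg:em_vpd}, you get only the mirror-descent guarantee $\E_{\pi_{\theta_{k+1}}}[r]\ge\E_{\pi_{\theta_k}}[r]+\beta\KL(\pi_{\theta_{k+1}}\parallel\pi_{\theta_k})$.
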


\begin{proof}
We chain two results already established in Appendix~\ref{sec:theory_app}.

\textbf{Step 1 (E-step improves the bound).}
From Eq.~\ref{eq:log_part_decomp}, the log-partition function decomposes as $\log Z_{\mathrm{dyn}}(x) = \mathcal{F}(q_\phi) + D_{\mathrm{KL}}(q_\phi \| \pi^*_{\mathrm{dyn}})$, where $\mathcal{F}(q_\phi) = \frac{1}{\beta}\mathbb{E}_{q_\phi}[r(x,y)] - D_{\mathrm{KL}}(q_\phi \| \pi_{\theta_k})$ is the dynamic ELBO. Since $\log Z_{\mathrm{dyn}}(x)$ is constant w.r.t.\ $\phi$, any reduction in $D_{\mathrm{KL}}(q_\phi \| \pi^*_{\mathrm{dyn}})$ produces an equal increase in $\mathcal{F}$. The E-step optimizes exactly this divergence, so the refined teacher $q_{\phi'_k}$ satisfies $\mathcal{F}(q_{\phi'_k}) \geq \mathcal{F}(q_{\theta_k})$.

\textbf{Step 2 (M-step improves the objective).}
From Eq.~\ref{eq:m_step_decomp}, the global divergence decomposes as $D_{\mathrm{KL}}(\pi_\theta \| \pi^*) = D_{\mathrm{KL}}(\pi_\theta \| q_{\phi'_k}) + \mathbb{E}_{\pi_\theta}[\log(q_{\phi'_k}/\pi^*)]$. Holding $q_{\phi'_k}$ fixed, the second term is constant w.r.t.\ $\theta$. The M-step minimizes $D_{\mathrm{KL}}(\pi_\theta \| q_{\phi'_k})$, directly reducing $D_{\mathrm{KL}}(\pi_\theta \| \pi^*)$. By the equivalence in Eq.~\ref{eq:kl_to_rlvr}, this implies $\mathcal{J}(\theta_{k+1}) \geq \mathcal{J}(\theta_k)$.

\textbf{Step 3 (Convergence).}
Chaining Steps~1 and~2 yields a monotonically non-decreasing sequence $\{\mathcal{J}(\theta_k)\}_{k=1}^K$. Since $\mathcal{J}$ is upper-bounded by $\beta \log Z(x)$ (Eq.~\ref{eq:log_part_bound}), the sequence converges.
\end{proof}

\paragraph{Extension to approximate updates (Generalized EM).}
In practice, neither step is solved to global optimality; each performs a finite number of gradient updates. Following the Generalized EM framework~\citep{neal1998view}, monotonic ELBO improvement is preserved provided each step \emph{does not decrease} its respective sub-objective---a condition satisfied by gradient descent with a sufficiently small learning rate. Two features of VPD further strengthen this practical guarantee:(i)~the dynamic trust-region (Eq.~\ref{eq:trust_region}) explicitly penalizes large teacher drift via $D_{\mathrm{KL}}(q_\phi \| \pi_{\theta_k})$, bounding per-iteration parameter change; and (ii)~freezing the student likelihoods $\pi_{\theta_k}(y|x)$ during each E-step ensures a stationary inner-loop landscape. Together, these place VPD squarely within the Generalized EM family with bounded per-step updates, providing strong practical assurance of monotonic progress even under partial optimization.

\subsection{Hybrid Distillation and RL}
\label{subsec:hybrid_baselines}

While our EM framework elegantly separates the integration of verifiable rewards (E-step) from the distillation of the reasoning process (M-step), an alternative paradigm in recent literature attempts to fuse these signals simultaneously within a single optimization phase. Since the on-policy nature of our M-step evaluates trajectories against a ground-truth verifier at no additional sampling cost, it is technically possible to compress both the dense distillation signal and the sparse scalar advantage into a single update. To rigorously benchmark the benefits of our decoupled co-evolutionary approach, we formulate three representative single-phase hybrid mechanisms to serve as strong baselines:

\textbf{1. Joint Loss Optimization.}
The most straightforward baseline computes the objective losses independently and optimizes their weighted sum. We calculate the standard GRPO surrogate loss $\mathcal{L}_{\text{GRPO}}$ using the sequence-level advantages, and combine it with the SDPO KL distillation loss:
\begin{equation}
    \mathcal{L}_{\text{Hybrid}}(\theta) = \omega_{\text{opd}} \cdot \mathcal{L}_{\text{SDPO}}(\theta) + \omega_{\text{rl}} \cdot \mathcal{L}_{\text{GRPO}}(\theta),
\end{equation}
where $\omega_{\text{opd}}$ and $\omega_{\text{rl}}$ are hyperparameters balancing the dense distributional guidance of the teacher with the unbiased terminal outcome of the verifier. While simple to implement, linearly combining a bounded reward with a potentially high-variance KL loss can often cause scale mismatches.

\textbf{2. Advantage Reshaping.}
Instead of summing the final losses, a second class of baselines fuses the signals at the advantage level. Following the methodology of Self-Distillation Policy Optimization (SDPO) \cite{hubotter2026reinforcement}, the teacher's dense distillation signal can be translated into a per-token advantage, $A_t^{\text{SDPO}} = \text{sg} \left( \log q_\phi(y_t \mid x, \mathcal{C}, y_{<t}) - \log \pi_\theta(y_t \mid x, y_{<t}) \right)$. This is then linearly combined with the sequence-level Monte Carlo advantage provided by the verifier, $A^{\text{GRPO}}$. For a given token $y_t$, the fused advantage is:
\begin{equation}
    A_t^{\text{Hybrid}}(y_t) = \omega_{\text{rl}} \cdot A^{\text{GRPO}} + \omega_{\text{opd}} \cdot A_t^{\text{SDPO}}(y_t).
\end{equation}
The combined advantage is then used as a direct drop-in replacement in the standard clipped policy gradient objective:
\begin{equation}
    \mathcal{L}_{\text{PPO}}(\theta) = - \mathbb{E}_{x, y} \left[ \sum_{t=1}^{|y|} \min \left( \rho_t(\theta) A_t^{\text{Hybrid}}(y_t), \text{clip}(\rho_t(\theta), 1-\epsilon, 1+\epsilon) A_t^{\text{Hybrid}}(y_t) \right) \right],
\end{equation}
where $\rho_t(\theta) = \pi_\theta(y_t \mid x, y_{<t}) / \pi_{\theta_{\text{old}}}(y_t \mid x, y_{<t})$ is the token-level importance sampling ratio. This baseline fundamentally balances the unbiased, high-variance nature of the environment reward with the dense, low-variance bootstrapped signal from the self-teacher.

\textbf{3. Distillation-Guided Advantage Reweighting.}
A fundamental limitation of the standard GRPO advantage $A^{\text{GRPO}}$ is its uniform application to all tokens in a sequence, failing to differentiate between critical reasoning steps and generic filler. To construct a baseline that addresses this without fully decoupling the steps, we can explicitly weight the sequence-level advantage using the teacher's distillation signal.

First, we capture the token-level discrepancy between the teacher and student using the log-ratio:
\begin{equation}
    \Delta_t = \text{sg} \left( \log q_\phi(y_t \mid x, \mathcal{C}, y_{<t}) - \log \pi_\theta(y_t \mid x, y_{<t}) \right).
\end{equation}
To rigorously formulate this multiplier and avoid destabilizing the RL training with unbounded log-ratios, we adapt the mathematical mechanics of RLVR with Self-Distillation (RLSD) \citep{yang2026self}. Specifically, we construct an exponential weight modulated by the sign of the sequence-level advantage:
\begin{equation}
    w_t = \exp(\text{sign}(A^{\text{GRPO}}) \cdot \Delta_t).
\end{equation}
This ensures that when a trajectory is successful ($A^{\text{GRPO}} > 0$), tokens favored by the teacher receive amplified credit, whereas in a failed trajectory ($A^{\text{GRPO}} < 0$), tokens disfavored by the teacher bear a greater penalty. 

To safely integrate this reweighting, we adapt RLSD's stabilization formula, applying a trust-region clip $\epsilon_w$ and a decaying mixing coefficient $\alpha \in [0, 1]$:
\begin{equation}
    A_t^{\text{Hybrid}}(y_t) = A^{\text{GRPO}} \cdot \left( (1 - \alpha) + \alpha \cdot \text{clip}(w_t, 1 - \epsilon_w, 1 + \epsilon_w) \right).
\end{equation}
By decaying $\alpha$ over the course of training, this baseline relies heavily on the teacher's fine-grained magnitude guidance early on, and smoothly transitions to pure verifier-driven GRPO as the policy matures. The hybrid advantage is similarly inserted into the standard policy gradient objective. This formulation ensures the overall update direction remains strictly anchored to the environment verifier, while our distillation signal dynamically reshapes the token-level credit.

\section{Experiments}\label{sec:experiment_app}

\subsection{Environment Feedback (LiveCodeBench)}\label{subsec:lcb}

\textbf{Dataset and Evaluation Protocol.} 
We evaluate our framework on the LiveCodeBench (LCB) v6 subset \citep{jain2024livecodebench}. To ensure a rigorous evaluation of generalization, we follow the exact protocol established by SDPO \citep{hubotter2026reinforcement}: during the on-policy rollout generation, the environment verifies the model's code against the \emph{public} unit tests provided in the prompt. If the code fails, the specific compiler error or assertion failure is returned as the diagnostic feedback $\mathcal{C}$. However, final model performance is exclusively evaluated on the held-out \emph{private} unit tests. Following the SDPO evaluation standard, we report the average accuracy over 4 independent rollouts to reliably measure the model's robust reasoning capabilities and ensure it is not merely overfitting to the visible test cases.

\textbf{Model Configuration.} 
We adopt Qwen3-8B \citep{yang2025qwen3} as our base model. To rigorously isolate the impact of our VPD framework and prevent the policy from relying on built-in extended reasoning mechanisms, we explicitly disable Qwen3's ``thinking mode'' across all evaluations. Additionally, we construct the feedback-augmented prompt for the teacher policy using the exact template format established by SDPO \citep{hubotter2026reinforcement}.

\textbf{Hyperparameters.} 
Across all baselines and VPD, we sample $N=8$ on-policy rollouts per prompt. For VPD, we apply an asymmetric update frequency, performing one E-step optimization for every $F=5$ M-step gradient updates. The shared-weight network is optimized using the AdamW optimizer. Detailed hyperparameters are provided in Table~\ref{tab:lcb_hyperparams}.

\begin{table}[h]
    \centering
    \caption{Hyperparameters for LiveCodeBench experiments (Qwen3-8B).}
    \label{tab:lcb_hyperparams}
    \small
    \begin{tabular}{lc}
    \toprule
    \textbf{Hyperparameter} & \textbf{Value} \\
    \midrule
    Learning Rate & $1 \times 10^{-6}$ \\
    Epochs & 30 \\
    Rollout Batch Size & 32 \\
    Rollouts per Prompt ($N$) & 8 \\
    Max Prompt Length & 2048 \\
    Max Response Length & 8192 \\
    PPO Minibatch Size & 1 \\
    SDPO Logits Topk & 20 \\
    SDPO Loss & Reverse KL \\
    SDPO Teacher Update Rate & 0.01 \\
    VPD E-Step Frequency ($F$) & 5 \\
    VPD E-Step Minibatch Size & 32 \\
    BCO temperature ($\beta$) & 0.1 \\
    \bottomrule
    \end{tabular}
\end{table}

\subsection{Contrastive Sibling Rollouts as Feedback (SciKnowEval)}\label{subsec:sciqa}

\textbf{Dataset and Evaluation Protocol.} 
We evaluate the generalization of our framework across diverse scientific domains using the SciKnowEval benchmark \citep{feng2024sciknoweval}, encompassing Biology, Chemistry, Materials Science, and Physics. Unlike code generation environments, SciKnowEval does not natively provide dense execution tracebacks; instead, the environment acts as a sparse verifier, returning only a binary correctness score via exact-match grading of the final extracted answer. We report the final average accuracy (Avg@16) on the benchmark's evaluation split.

\textbf{Contrastive Sibling Feedback Mechanism.} 
Because the environment cannot generate natural language critiques, we synthesize the diagnostic feedback $\mathcal{C}$ using the model's own successful rollouts. During the rollout phase, we sample $N$ trajectories per prompt. For each generated rollout, we select one successful trajectory from the same prompt's rollout group (excluding itself) to serve as the conditioning information for the teacher. If all $N$ rollouts for a given prompt are uniformly incorrect (meaning no successful sibling exists), those samples are entirely excluded from both the E-step and M-step updates. The prompt template for the teacher policy strictly follows the exact formatting established in SDPO.

\textbf{Model Configuration and Hyperparameters.} 
We evaluate Qwen3-1.7B, Qwen3-8B \citep{yang2025qwen3}, and OLMo3-7B-Instruct \citep{olmo2025olmo}. As with our LiveCodeBench experiments, we explicitly disable the built-in ``thinking mode'' for all models to rigorously isolate the benefits of the VPD framework. We sample $N=8$ trajectories per prompt during the data collection phase. The shared-weight network is optimized using the AdamW optimizer. To maintain stable trust-region dynamics during the EM cycle, we employ an asymmetric update frequency of $F=5$ (one E-step per five M-steps). Detailed hyperparameters are provided in Table~\ref{tab:sciknoweval_hyperparams}.

\begin{table}[ht]
    \centering
    \caption{Hyperparameters for SciKnowEval experiments.}
    \label{tab:sciknoweval_hyperparams}
    \small
    \begin{tabular}{lc}
    \toprule
    \textbf{Hyperparameter} & \textbf{Value} \\
    \midrule
    Learning Rate & $1 \times 10^{-5}$ (Qwen) / $5 \times 10^{-6}$ (OLMO) \\
    Training Steps & 500 \\
    Rollout Batch Size & 32 \\
    Rollouts per Prompt ($N$) & 8 \\
    Max Prompt Length & 2048 \\
    Max Response Length & 8192 \\
    PPO Minibatch Size & 32 \\
    SDPO Logits Top-k & 100 \\
    SDPO Loss & JS \\
    SDPO Teacher Update Rate & 0.05 \\
    VPD E-Step Frequency ($F$) & 5 \\
    VPD E-Step Minibatch Size & 32 \\
    BCO Temperature ($\beta$) & 0.1 \\
    \bottomrule
    \end{tabular}
\end{table}

\subsection{Self-Critique via LLM Judge (SciKnowEval)}\label{subsec:judge}

\textbf{Motivation and Protocol.} 
As discussed in Section \ref{subsec:sciqa}, the contrastive sibling mechanism relies on the presence of at least one successful trajectory within a prompt's rollout group. If all $N$ rollouts fail, the sample provides no contrastive textual guidance. To evaluate our framework's ability to operate without contrastive positive pairs, we introduce an autonomous LLM Judge setting. In this configuration, the model acts as its own evaluator, generating diagnostic critiques for its failed trajectories. 

\textbf{Critique Generation Prompt.} 
To generate the diagnostic feedback $\mathcal{C}$, the failed trajectory is passed back to the model along with the ground-truth answer. Because SciKnowEval acts as a sparse verifier, it does not provide step-by-step ground-truth derivation; the available \texttt{\{reference\_solution\}} is strictly limited to the final multiple-choice letter (e.g., ``The correct answer is A''). The model must therefore independently deduce the logical gap between its attempt and the correct final answer. We generate the critique using the following zero-shot prompt:

\begin{tcolorbox}[colback=gray!5,colframe=gray!50,title=LLM Judge Critique Prompt]
\small
\texttt{Given this multiple-choice science question, a student's incorrect attempt, and the correct answer, provide specific feedback on what went wrong in the student's reasoning.} \\

\texttt{Problem: \{problem\}} \\

\texttt{Student's attempt: \{response\}} \\

\texttt{Reference solution: \{reference\_solution\}} \\

\texttt{Provide 2-3 sentences of specific, actionable feedback explaining why the student's chosen answer is wrong and what conceptual misunderstanding led to the error. Do not simply state the correct answer.}
\end{tcolorbox}

\textbf{Teacher Conditioning and Hyperparameters.} 
Once the critique is generated, it serves as the diagnostic feedback $\mathcal{C}$ for the E-step. The teacher policy is conditioned on this critique using a format identical to the contrastive sibling template (replacing the sibling trajectory with the generated text). The subsequent E-step and M-step optimizations proceed exactly as in the contrastive setting. All model configurations, rollout strategies ($N=8$), update frequencies ($F=5$), and optimization hyperparameters are identical to those detailed in Table~\ref{tab:sciknoweval_hyperparams}.

\subsection{Base Model ``Cold Start''}
\label{subsec:cold_start}

\textbf{Motivation and Setup.} 
To investigate the reliance of self-distillation methods on pre-existing instruction-following capabilities, we evaluate our framework in a pure ``cold start'' scenario. In this setting, we directly optimize a base foundation model using on-policy generation. We utilize Qwen3-4B-Base and evaluate its performance on the SciKnowEval benchmark. 

\textbf{Prompting and Formatting.} 
We maintain the exact same prompt templates used for the instruction-tuned models. However, because the base model lacks instruction-following alignment, it struggles to efficiently parse and integrate the textual critique ($\mathcal{C}$) into its subsequent reasoning steps. For standard self-distillation, this inability to utilize feedback causes the teacher's target distribution to rapidly degrade.

\textbf{Hyperparameters and Stability.} 
All hyperparameters are kept identical to those detailed in Table~\ref{tab:sciknoweval_hyperparams}. While pure RL (GRPO) successfully elicits reasoning capabilities through sparse reward maximization, standard SDPO immediately collapses to a 0\% pass rate within the first few steps. By actively optimizing the teacher policy and constraining the trust region, VPD's E-step effectively prevents this rapid collapse, maintaining a functional learning trajectory for substantially longer. However, because the M-step distillation inherently relies on tracking the teacher's intermediate token distribution, the base model still ultimately struggles compared to pure outcome-based RL, resulting in lower final accuracy. The results are summarized in Table~\ref{tab:cold_start_app}.

\begin{table}[ht]
    \centering
    \caption{Cold start training on SciKnowEval using Qwen3-4B-Base.}
    \label{tab:cold_start_app}
    \footnotesize
    \setlength{\tabcolsep}{5pt} 
    \begin{tabular}{lccccc}
    \toprule
    \textbf{Method} & \textbf{Bio.} & \textbf{Chem.} & \textbf{Mat.} & \textbf{Phys.} & \textbf{AVG} \\
    \midrule
        Base & 29.37 & 32.71 & 50.84 & 51.33 & 41.06 \\
        GRPO & 58.63 & 77.98 & 81.12 & 80.23 & 74.49 \\
        VPD & 48.50 & 70.45 & 64.03 & 72.81 & 63.95 \\
    \bottomrule
    \end{tabular}
\end{table}

Figure~\ref{fig:cold_start_app} presents additional training curves for the base model cold-start experiments across the remaining SciKnowEval domains.

\begin{figure}[h]
    \centering
    \includegraphics[width=0.3\linewidth]{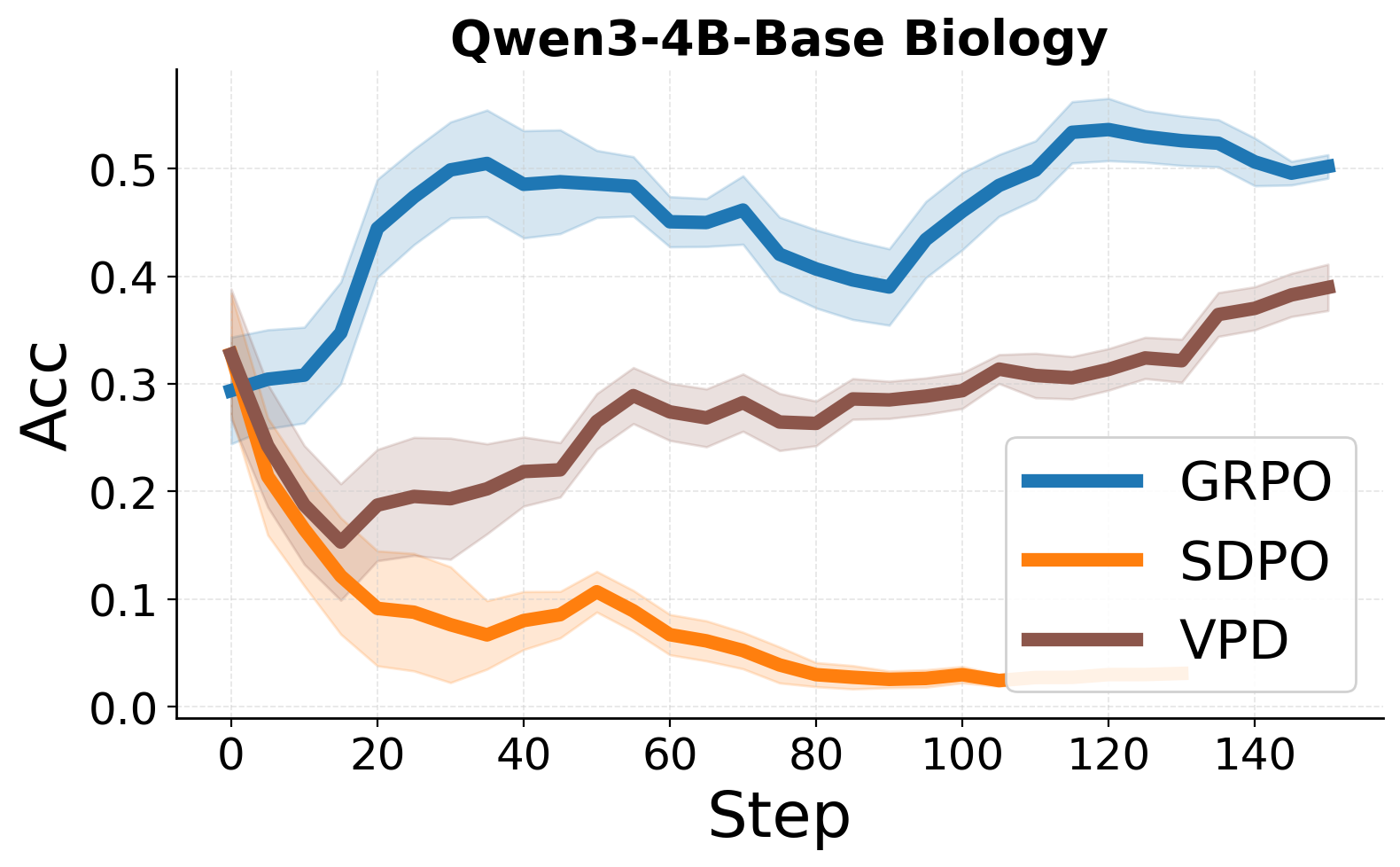}
    \includegraphics[width=0.3\linewidth]{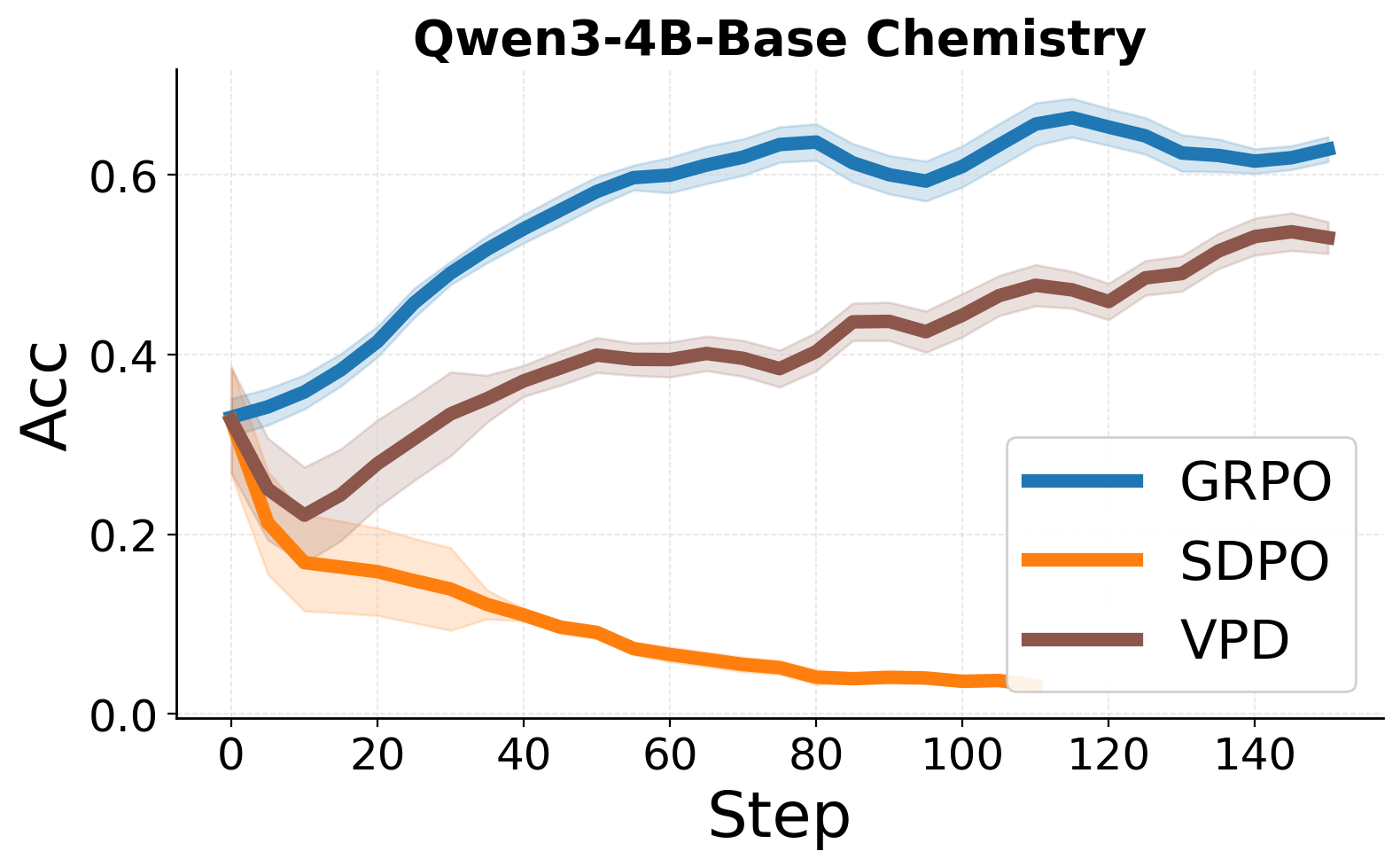}
    \includegraphics[width=0.3\linewidth]{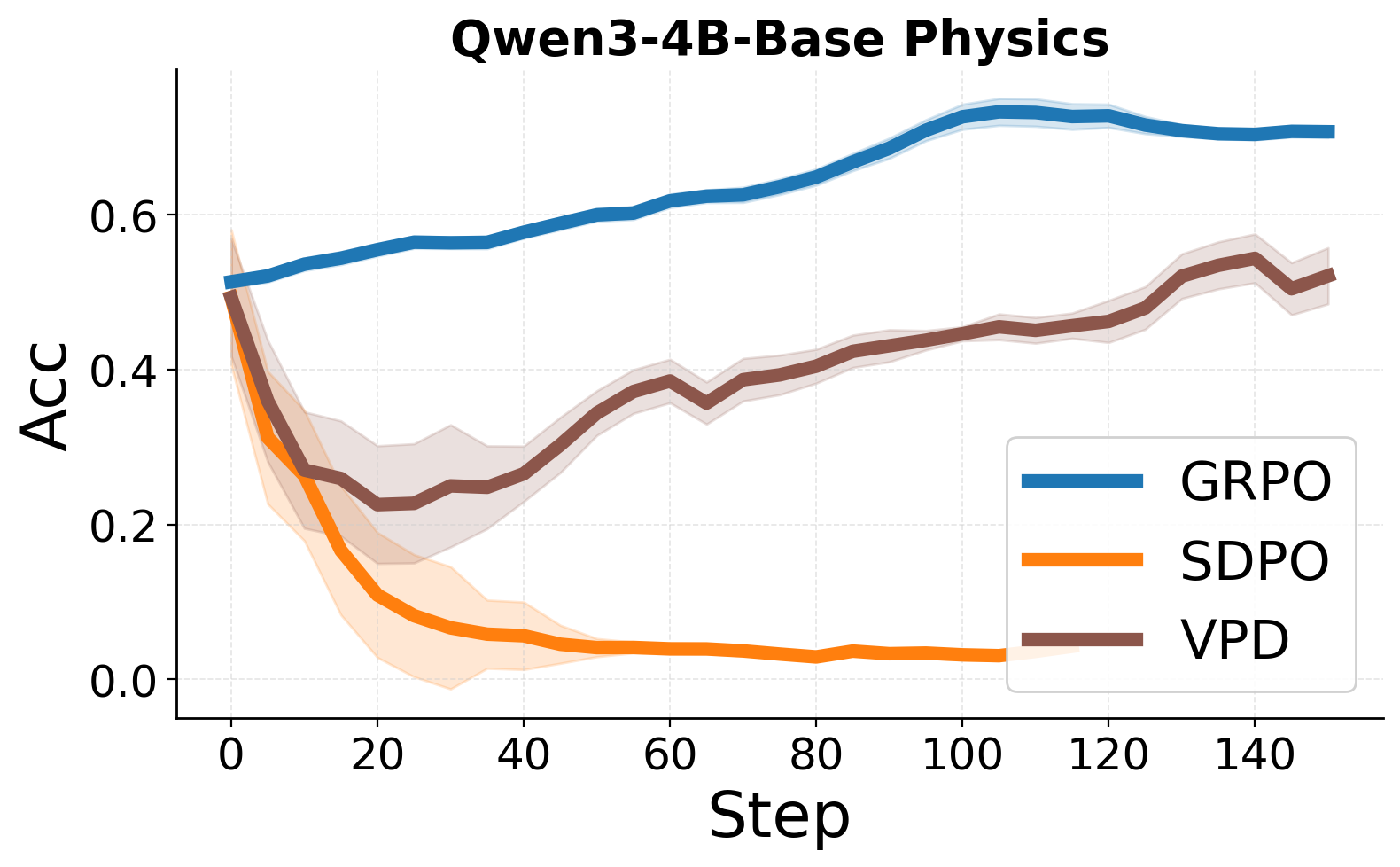}
\caption{Additional training curves illustrating the cold-start optimization dynamics of Qwen3-4B-Base across the Biology, Chemistry, and Physics domains. While standard SDPO collapses almost immediately, VPD actively stabilizes the trust region and effectively delays training degradation.}
\label{fig:cold_start_app}
\end{figure}

\subsection{Mathematical Reasoning Experimental Setup}
\label{subsec:math_reasoning}

\textbf{Dataset and Evaluation Protocol.} 
To stress-test our framework on highly symbolic and strict logical domains, we utilize the DAPO-Math dataset \citep{yu2025dapo} for training and evaluate the resulting policies on the Math500 benchmark \citep{lightman2023lets}. DAPO-Math provides a challenging distribution of competitive mathematics problems. During training, the verifiable environment parses the model's final answer (enclosed in a \texttt{\textbackslash boxed\{\}} tag) and uses a symbolic exact-match grader to assign a binary reward. Final performance is evaluated using the standard Math500 test script, reporting the Avg@4 accuracy.

\textbf{Feedback Mechanism for Mathematics.} 
For mathematical reasoning, we employ the contrastive sibling mechanism to generate the diagnostic feedback $\mathcal{C}$. If a model's logical derivation leads to an incorrect final boxed answer, the teacher policy is provided with a successful sibling trajectory to identify the algebraic or conceptual error. 

\textbf{Optimization Dynamics.} 
As noted in the main text, mathematical reasoning is uniquely unforgiving; intermediate self-distillation targets ($\pi^*$) that contain subtle arithmetic hallucinations can actively poison the student policy. While pure sparse RL (GRPO) achieves an 83.8\% accuracy by strictly rewarding fully correct derivations, standard SDPO suffers from severe training collapse as it distills these flawed linguistic explanations. We sample $N=8$ trajectories per prompt and apply an asymmetric update frequency ($F=5$) for VPD. Detailed training hyperparameters for the DAPO-Math experiments are provided in Table~\ref{tab:math_hyperparams}.

\begin{table}[ht]
    \centering
    \caption{Hyperparameters for DAPO-Math experiments (Qwen3-8B).}
    \label{tab:math_hyperparams}
    \small
    \begin{tabular}{lc}
    \toprule
    \textbf{Hyperparameter} & \textbf{Value} \\
    \midrule
    Learning Rate & $5 \times 10^{-6}$ \\
    Training Steps & 200 \\
    Rollout Batch Size & 32 \\
    Rollouts per Prompt ($N$) & 8 \\
    Max Prompt Length & 2048 \\
    Max Response Length & 8192 \\
    PPO Minibatch Size & 32 \\
    SDPO Logits Top-k & full logits \\
    SDPO Loss & Forward KL \\
    SDPO Teacher Update Rate & 0.05 \\
    VPD E-Step Frequency ($F$) & 5 \\
    VPD E-Step Minibatch Size & 32 \\
    BCO Temperature ($\beta$) & 0.1 \\
    \bottomrule
    \end{tabular}
\end{table}

Figure~\ref{fig:math_app} presents additional training curves for the mathematical reasoning experiments across the AIME24, AIME25, and AMC 23 benchmarks.

\begin{figure}[h]
    \centering
    \includegraphics[width=0.32\linewidth]{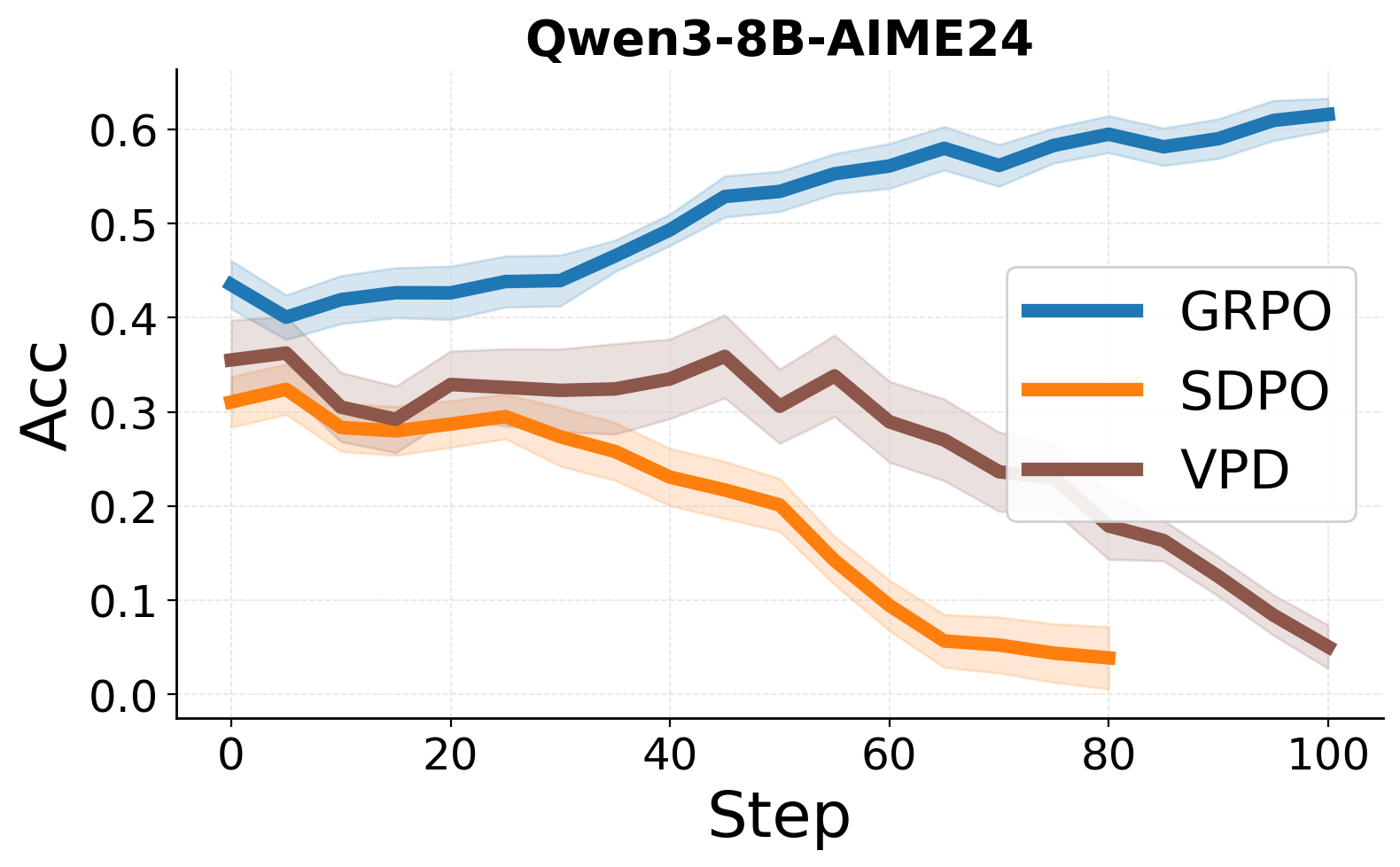}\hfill
    \includegraphics[width=0.32\linewidth]{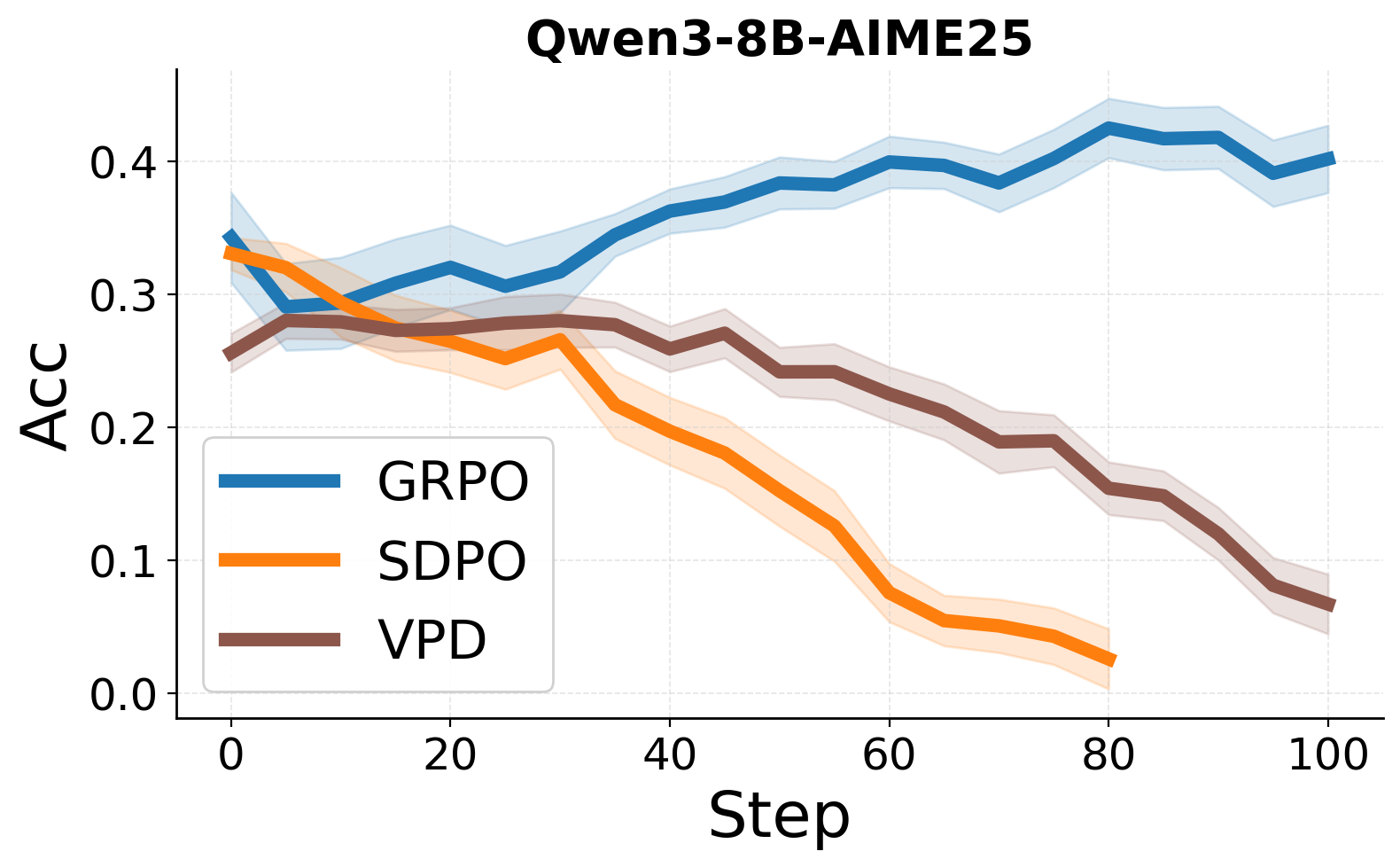}\hfill
    \includegraphics[width=0.32\linewidth]{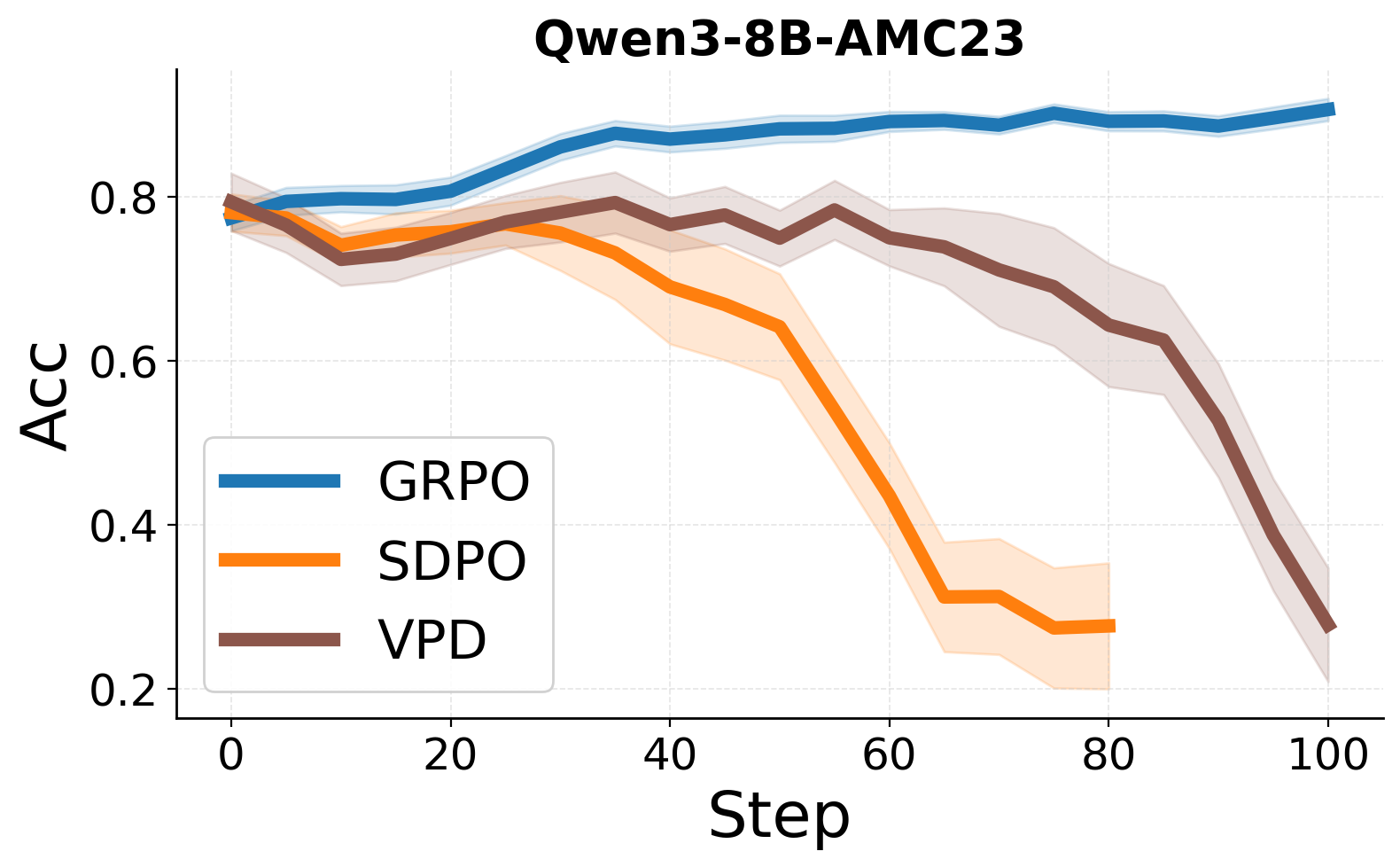}
    \caption{Additional training curves illustrating the performance on the AIME24, AIME25, and AMC23 benchmarks. Consistent with our Math500 findings, while VPD actively delays the severe training collapse observed with standard SDPO, pure sparse RL (GRPO) remains the most robust optimization method for these strict, highly symbolic domains.}
    \label{fig:math_app}
\end{figure}

\subsection{Ablations}\label{subsec:ablation}

\textbf{E-Step Update Frequency ($F$).}
To investigate the impact of the asymmetric update frequency, we utilize Qwen3-1.7B on the SciKnowEval benchmark. All foundational hyperparameters remain strictly identical to the configuration detailed in Table~\ref{tab:sciknoweval_hyperparams}. In our algorithmic implementation, configuring the update frequency to $F$ dictates the optimization schedule at the rollout-batch level. For every generated batch of on-policy rollouts, we perform the standard M-step distillation updates for the student. Additionally, once every $F$ rollout batches, we reuse this exact same batch of trajectories to perform an E-step optimization to refine the teacher.

As discussed in the main text, $F=5$ provides an optimal target-network dynamic. When setting $F=1$ (synchronous updates), the temporal runtime overhead increases as well, and the rapidly shifting teacher distribution prevents the student from adequately converging on the distillation targets. Conversely, for $F=10$, the teacher's diagnostic capabilities stagnate relative to the student's learning pace, yielding stale guidance that fails to appropriately address the student's newly emerging logical errors.

\textbf{Dynamic vs. Fixed Reference Prior.}
In standard reinforcement learning (e.g., PPO) and traditional self-distillation paradigms (e.g., SDPO), the policy is typically regularized via a KL-divergence penalty against a completely frozen reference model ($\pi_{\text{ref}}$), which is usually the initial base model. For our Fixed Prior ablation, we replicate this standard mechanism during the E-step, anchoring the teacher's BCO objective strictly to the frozen Qwen3-1.7B initialization. For our standard VPD (Dynamic Prior) setting, this prior is instead continually updated to match the active student policy ($\pi_\theta$) at the start of each expectation-maximization cycle.

The severe training instability observed with the Fixed Prior (illustrated in Figure~\ref{fig:prior}) stems from an escalating distribution shift. As the student policy naturally evolves and discovers novel, successful reasoning paths during the iterative M-steps, its distribution inherently drifts from the initial base model. If the teacher is subsequently forced to anchor its E-step updates to the frozen $\pi_{\text{ref}}$, its optimized target distribution is artificially pulled backwards toward the outdated base behaviors. This creates a destructive gradient conflict during the subsequent M-step: the student is penalized for exploring new reasoning paths and is instead distilled toward stale, pre-training priors. By dynamically anchoring the prior to $\pi_\theta$, VPD guarantees a sliding trust region, ensuring the teacher's guidance smoothly tracks the student's active exploration space without inducing training collapse. Figure~\ref{fig:prior_app} provides additional training progression plots across the remaining SciKnowEval domains.

\begin{figure}[h]
    \centering
    \includegraphics[width=0.3\linewidth]{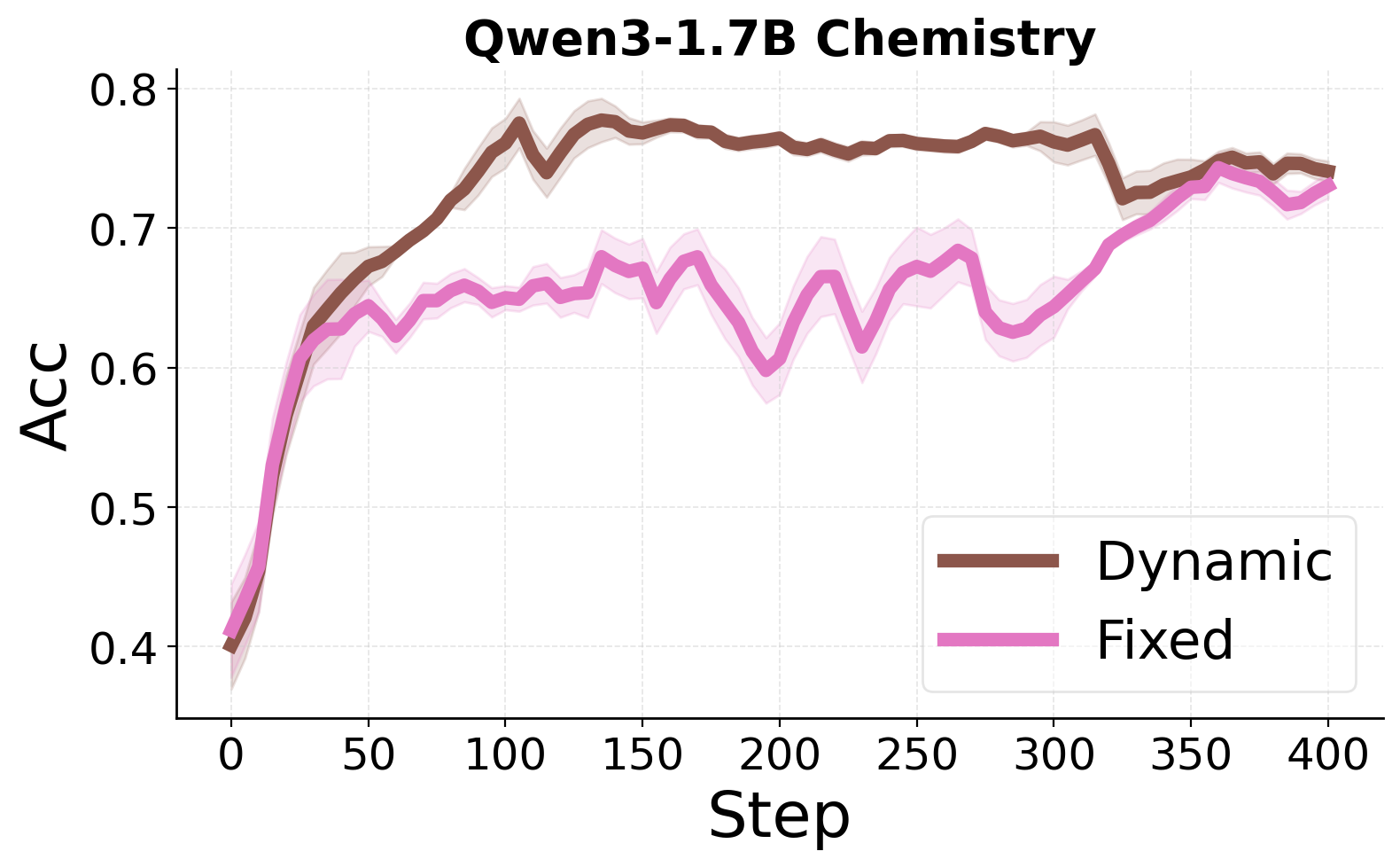}
    \includegraphics[width=0.3\linewidth]{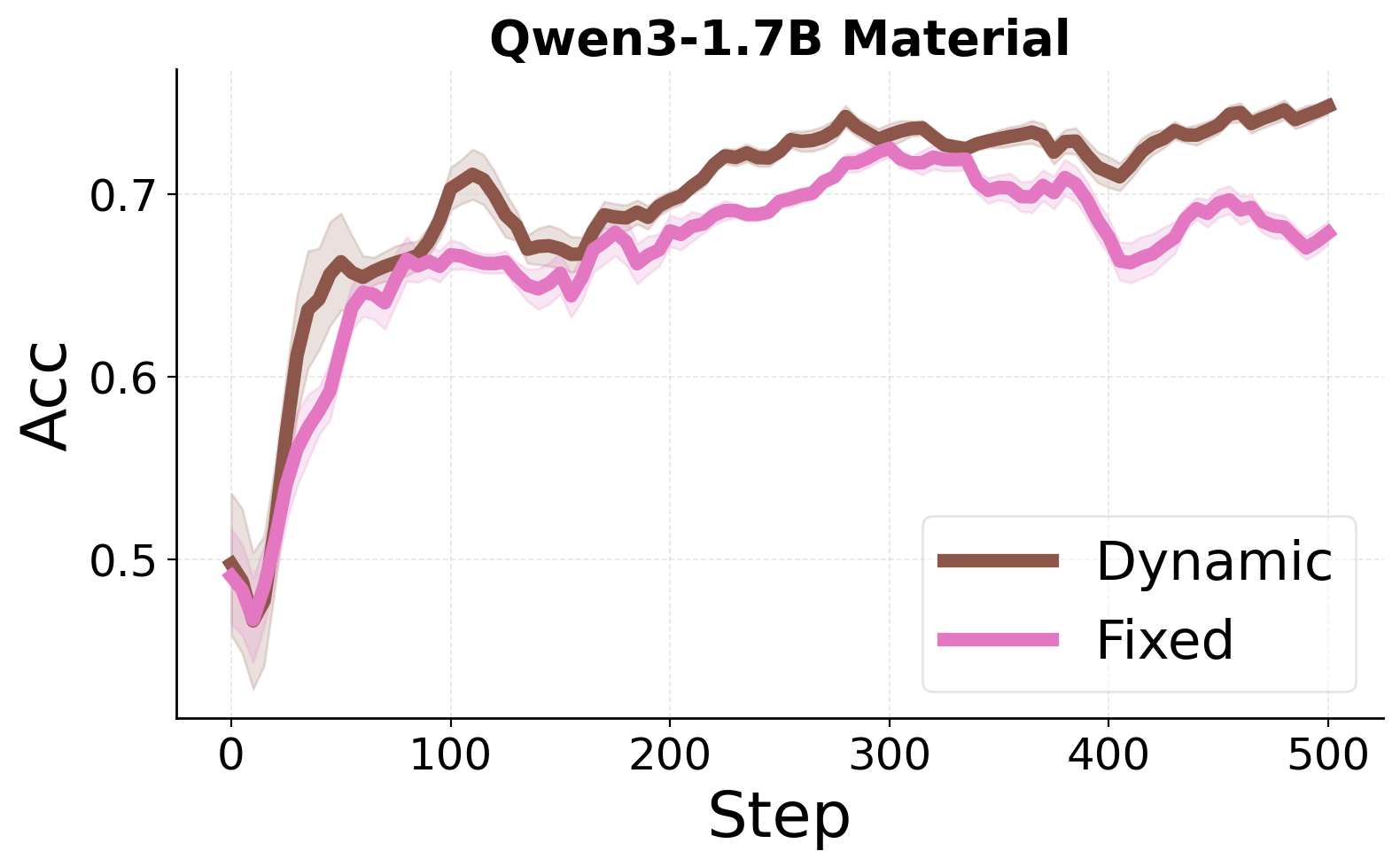}
    \includegraphics[width=0.3\linewidth]{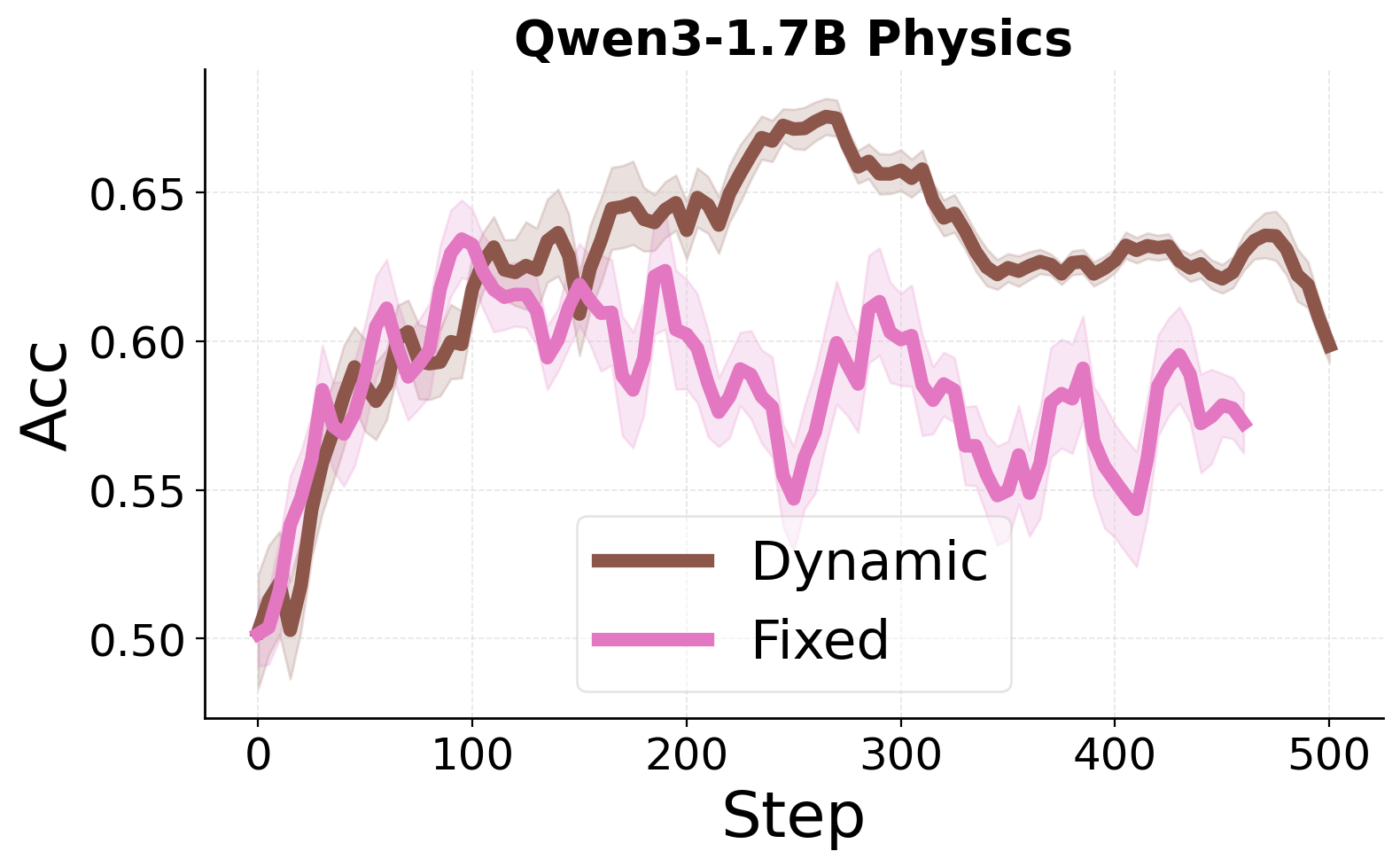}
\caption{Additional training curves comparing the dynamic and fixed reference priors during E-step optimization across the Chemistry, Materials Science, and Physics domains.}    
\label{fig:prior_app}
\end{figure}



\end{document}